\renewcommand{\models}{\vDash}
\newcommand{\notmodels}{\nvDash}
\newcommand{\sasplus}{\ensuremath{\textrm{SAS}^+}}
\newcommand{\variables}{\ensuremath{\mathcal V}}
\newcommand{\operators}{\ensuremath{\mathcal O}}
\newcommand{\init}{\ensuremath{s_{\textup{I}}}}
\newcommand{\goal}{\ensuremath{s_\star}}
\newcommand{\cost}{\ensuremath{\textit{cost}}}
\newcommand{\fact}[2]{\ensuremath{\langle #1, #2\rangle}}
\newcommand{\states}{\ensuremath{\mathcal S}}
\newcommand{\dom}{\ensuremath{\textit{dom}}}
\newcommand{\vars}{\ensuremath{\textit{vars}}}
\newcommand{\pre}{\ensuremath{\mathit{pre}}}
\newcommand{\eff}{\ensuremath{\mathit{eff}}}
\newcommand{\ts}{\ensuremath{\textup{TS}}}
\newcommand{\transitions}{\ensuremath{\mathcal T}}
\newcommand{\trans}[3]{\ensuremath{#1 \xrightarrow{#2} #3}}
\newcommand{\auxvar}{\ensuremath{\textit{Aux}}}
\newcommand{\res}[2]{\ensuremath{#1\llbracket#2\rrbracket}}
\newcommand{\astar}{\ensuremath{\textup{A}^{\ast}}}
\newcommand{\hopt}{\ensuremath{h^*}}
\newcommand{\hTCP}{\ensuremath{h^\textup{TCP}}}
\newcommand{\hOCP}{\ensuremath{h^\textup{OCP}}}
\newcommand{\hpot}{\ensuremath{h^\textup{pot}}}
\newcommand{\features}{\ensuremath{\mathcal F}}
\newcommand{\abstractions}{\ensuremath{\mathcal A}}
\newcommand{\inlinecite}[1]{\citeauthor{#1}~\shortcite{#1}}
\DeclareMathOperator*{\argmax}{arg\,max}
\newcommand{\ie}{i.\,e.}
\newcommand{\eg}{e.\,g.}
\newtheorem{definition}{Definition}
\newtheorem{theorem}{Theorem}
\newtheorem{proposition}{Proposition}
\newtheorem{corollary}{Corollary}
\renewenvironment{proof}{\noindent {\textbf{Proof:}}}%
                 {\hfill $\square$ \par \addvspace{\smallskipamount}}
               {\hfill $\square$ \par \addvspace{\smallskipamount}}
\begin{document}
\allowdisplaybreaks

\maketitle

\begin{abstract}
Potential heuristics for state-space search are defined as weighted
sums over simple state features. \emph{Atomic} features consider the
value of a single state variable in a factored state representation,
while \emph{binary} features consider joint assignments to two state
variables. Previous work showed that the set of all admissible and
consistent potential heuristics using \emph{atomic} features can be
characterized by a compact set of linear constraints. We generalize
this result to \emph{binary} features and prove a hardness result for
features of higher dimension. Furthermore, we prove a tractability
result based on the treewidth of a new graphical structure we call the
\emph{context-dependency graph}. Finally, we study the relationship of
potential heuristics to \emph{transition cost partitioning}.
Experimental results show that binary potential heuristics are
significantly more informative than the previously considered atomic
ones.
\end{abstract}

\section{Introduction}

Potential heuristics \cite{pommerening-et-al-aaai2015} are a family of
declarative heuristic functions for state-space search. They fix the
form of the heuristic to be a weighted sum over a set of features.
Conditions on heuristic values such as admissibility and consistency
can then be expressed as constraints on feature weights.

Previous work on admissible potential heuristics is limited to
\emph{atomic} features, which consider the value of a single state
variable in a factored state representation.
\inlinecite{pommerening-et-al-aaai2015} show that admissible and
consistent potential heuristics over atomic features can be
characterized by a compact set of linear constraints.
\inlinecite{seipp-et-al-icaps2015} introduce several objective
functions to select the \emph{best} potential heuristic according to
different criteria. They showed that a small collection of diverse
potential heuristics closely approximates the state equation heuristic
(SEQ) \cite{vandenbriel-et-al-cp2007,bonet-ijcai2013,%
  bonet-vandenbriel-icaps2014}. Since the quality of the SEQ heuristic
is an upper limit on the quality of \emph{any combination} of
potential heuristics over atomic features
\cite{pommerening-et-al-aaai2015}, more complex features are needed to
significantly improve heuristic quality of potential heuristics.

Additionally, theoretical analyses of common planning benchmarks
\cite{chen-gimenez-icaps2007,chen-gimenez-wollic2009,%
  lipovetzky-geffner-ecai2012,seipp-et-al-ijcai2016} suggest that
potential heuristics for \emph{binary} features (that consider the
joint assignment of two state variables rather than valuations over
single state variables as in the atomic case) could already lead to a
significant increase in accuracy in many planning domains.

In this paper we generalize known results about potential heuristics
with atomic features to those with larger features. After introducing
some notation, we show that admissible and consistent potential
heuristics for binary features are also characterized by a compact set
of linear constraints. We then prove that such a compact
characterization is not possible in the general case of features
mentioning three or more variables. However, we show that compact
representations are still possible for ``sparse'' features, as
measured by the treewidth \cite{dechter-2003} of a new graphical
structure we call the context-dependency graph. Finally, we generalize
a known relation between atomic potential heuristics and optimal cost
partitioning and show that potential heuristics correspond to optimal
transition cost partitionings \cite{keller-et-al-ijcai2016}, \ie, cost
partitionings that distribute the cost of each transition instead of
each operator.

\section{Background}

We consider \sasplus\ planning tasks \cite{backstrom-nebel-compint1995} in
transition normal form (TNF) \cite{pommerening-helmert-icaps2015}. A
planning task is a tuple $\Pi = \langle \variables, \operators, \init,
\goal \rangle$ with the following components.
$\variables$ is a finite set of \emph{variables}
where each $V \in \variables$ has a finite domain $\dom(V)$.
A pair $\fact{V}{v}$ of a variable $V\in \variables$ and one of its
values $v \in \dom(V)$ is called a \emph{fact}. Partial variable
assignments $p$ map a subset of variables $\vars(p) \subseteq \variables$
to values in their domain. Where convenient, we also treat them as
sets of facts. A partial variable assignment $s$ with $\vars(s) =
\variables$ is called a \emph{state} and $\states$ is the set of all
states. The state $\init$ is the \emph{initial state} of $\Pi$ and
the state $\goal$ is the \emph{goal state}. (Note that in TNF, there
is a single goal state.)
We call a partial variable assignment $p$ \emph{consistent} with a state
$s$ if $s$ and $p$ agree on all variables in $\vars(p)$. The set
$\operators$ is a finite set of \emph{operators} $o$, each
with a \emph{precondition} $\pre(o)$, an \emph{effect} $\eff(o)$, and a
\emph{cost} $\cost(o)$, where $\pre(o)$ and $\eff(o)$ are both partial
variable assignments and $\cost(o)$ is a non-negative integer.
The restriction to tasks in TNF means that we can assume that
$\vars(\pre(o)) = \vars(\eff(o))$. We denote this set of variables
by $\vars(o)$. Considering only tasks in TNF does not limit generality,
since there is an efficient transformation from general \sasplus\ tasks
into equivalent tasks in TNF \cite{pommerening-helmert-icaps2015}.

An operator $o$ is \emph{applicable} in state $s$ if $s$ is
consistent with $\pre(o)$. Applying $o$ in $s$ results in the state
$\res{s}{o}$ with $\res{s}{o}[V] = \eff(o)[V]$ for all $V \in \vars(o)$
and $\res{s}{o}[V] = s[V]$ for all other variables. An operator sequence
$\pi = \langle o_1, \dots, o_n \rangle$ is applicable in state $s$ if
there are states $s = s_0, \dots, s_n$ such that $o_i$ is applicable in
$s_{i-1}$ and $\res{s_{i-1}}{o_i} = s_i$. We write $\res{s}{\pi}$
for $s_n$. If $\res{s}{\pi} = \goal$, we call $\pi$ an \emph{$s$-plan}.
If $s = \init$, we call it a \emph{plan}. The cost of
$\pi$ under a cost function $\cost'$ is $\sum_{i=1}^n \cost'(o_i)$. An
$s$-plan $\pi$ with minimal $\cost'(\pi)$ among all $s$-plans is called
\emph{optimal} and we write its cost as $\hopt(s, \cost')$, or
$\hopt(s)$ if $\cost' = \cost$.

A \emph{heuristic function} $h$ maps states to values in $\mathbb R
\cup \{-\infty, \infty\}$. It is \emph{admissible} if $h(s) \le
\hopt(s)$ for all $s \in \states$, \emph{goal-aware} if $h(\goal) \le
0$, and \emph{consistent} if $h(s) \le \cost(o) + h(\res{s}{o})$ for
all $s \in \states$ and $o \in \operators$ that are applicable in $s$.

A task $\Pi$ induces a weighted, labeled transition system $\ts_\Pi =
\langle \states, \transitions, \init, \{\goal\}\rangle$ with the set of
states $\states$, the initial state $\init$, the single goal state
$\goal$ and set of transitions $\transitions$: for each
$s\!\in\! \states$ and $o\! \in\! \operators$ that is applicable in $s$,
there is a transition $\trans{s}{o}{\res{s}{o}}$
labeled with $o$ and weighted with $\cost(o)$. Shortest paths in
$\ts_\Pi$ correspond to optimal plans for $\Pi$.

A conjunction of facts is called a \emph{feature} and the number of
conjuncts is called its \emph{size}. Features of size 1 and 2 are
called \emph{unary} and \emph{binary} features. We say a
feature $f$ \emph{is true} in a state $s$ (written as $s \models f$) if
all its facts are in $s$. A \emph{weight function} for 
features $\features$ is a function $w: \features \to {\mathbb R}$. The
\emph{potential} of a state $s$ under a weight function $w$ is \[
  \varphi(s) = \sum_{f \in \features} w(f) [s \models f],
\] where the bracket is an indicator function \cite{knuth-amm1992}. We call $\varphi$ the \emph{potential heuristic}
for features $\features$ and weights $w$. Its \emph{dimension}
is the size of a largest feature $f\in\features$.

\section{Two-Dimensional Potential Heuristics}

Two-dimensional potential heuristics only consider atomic and binary
features. We use that consistent heuristics are admissible iff
they are goal-aware \cite{russell-norvig-1995}.

Let $\Pi = \langle \variables, \operators, \init, \goal, \cost \rangle$
be a planning task in TNF and $\ts_\Pi = \langle \states, \transitions,
\init, \{\goal\} \rangle$ its transition system. A potential heuristic
$\varphi$ over features $\features$ is goal-aware and consistent iff it
satisfies the following constraints:
\begin{align}
    \varphi(\goal) &\le 0, &
    \label{constraint:goal-awareness}\\
    \varphi(s) - \varphi(s') &\le \cost(o)
    & \text{for $\trans{s}{o}{s'} \in \transitions$.}
    \label{constraint:consistency}
\end{align}
This set of constraints has exponential size as there is one constraint
for each transition $\trans{s}{o}{s'}$ in $\transitions$.

Constraint~(\ref{constraint:goal-awareness}) is a linear constraint
over the weights, which is easy to see since
$\varphi(\goal)=\sum_{f\in\features} w(f)[\goal\models f]$.

Next, let $o\in\operators$ be a fixed operator and consider
constraint~(\ref{constraint:consistency}). Replacing $\varphi(s)$ and
$\varphi(s')$ by their definitions, we get the equivalent constraint
\begin{equation}
    \sum_{f \in \features} w(f)([s \models f] - [s' \models f]) \le \cost(o)
    \label{constraint:transition-consistency}
\end{equation}
for all transitions $\trans{s}{o}{s'} \in \transitions$.
We abbreviate the change of a feature's truth value ($[s \models f] - [\res{s}{o} \models f]$) as
$\Delta_o(f,s)$.

We partition the set of features into three subsets:
\emph{irrelevant} features $\features^\textup{irr}$ have no
variables in common with $\vars(o)$, \emph{context-independent} features
$\features^\textup{ind}$ mention only variables in $\vars(o)$,
and the remaining
\emph{context-dependent} features $\features^\textup{ctx}$ mention
one variable from $\vars(o)$ and another variable not in $\vars(o)$. We
write $\Delta_o^\textup{irr}(s)$ for $\sum_{f \in \features^\textup{irr}} w(f)
\Delta_o(f,s)$ and analogously $\Delta_o^\textup{ind}(s)$ and
$\Delta_o^\textup{ctx}(s)$.

The truth value of an irrelevant feature never changes by applying $o$
in some state. Thus, $\Delta_o^\textup{irr}(s) = 0$ for all states $s$.

For a context-independent feature $f$, the effect of applying $o$ in
$s$ is completely determined by $o$: $f$ holds in $s$ iff $f$ is entailed
by the precondition, and in $\res{s}{o}$ iff it is entailed
by the effect.
Thus, $\Delta_o(f,s)=[\pre(o)\models f]-[\eff(o)\models f]$
for every state $s$ in which $o$ is applicable.
Clearly, $\Delta_o(f,s)$ and $\Delta_o^\textup{ind}(s)$ do not depend on the state $s$
for $f \in\features^\textup{ind}$ and
we write $\Delta_o(f)$ and $\Delta_o^\textup{ind}$.

A feature $f \in \features^\textup{ctx}$ is a conjunction $f = f_o \land f_{\bar o}$
where $f_o$ is a fact over a variable in $\vars(o)$ and $f_{\bar o}$ is a fact
over a variable in $\variables_{\bar o} = \variables \setminus \vars(o)$. If $o$ is applied in a state $s$ with
$s \notmodels f_{\bar o}$, then $s\notmodels f$ and $\res{s}{o}\notmodels f$, so
$\Delta_o(f,s) = 0$. For the remaining features, we know that $f_{\bar o}$ is
present in both $s$ and $\res{s}{o}$ and the truth value of $f_o$ is solely determined by $o$.
Thus $\Delta_o(f, s) = \Delta_o(f_o) [s \models f_{\bar o}]$.
If $o$ is applicable in $s$,
\begin{align}
  \Delta_o^\textup{ctx}(s)
  &= \! \sum_{V\in\variables_{\bar o}}
  \sum_{\substack{f\in\features^\textup{ctx}\\f = f_o \land f_{\bar o}\\\vars(f_{\bar o}) = \{V\}}}
  w(f)\Delta_o(f,s) \\
  &= \! \sum_{V\in\variables_{\bar o}}
  \sum_{\substack{f\in\features^\textup{ctx}\\f = f_o \land f_{\bar o}}}
  w(f)\Delta_o(f_o)[f_{\bar o} \!=\! \fact{V}{s[V]}] \\
  &\leq \! \sum_{V\in\variables_{\bar o}}
  \sum_{\substack{f\in\features^\textup{ctx}\\f = f_o \land f_{\bar o}}}
  w(f)\Delta_o(f_o)[f_{\bar o} \!=\! \fact{V}{v_V^*]}]
\label{eq:important:1}
\intertext{where}
  \notag
  v_V^* &= \argmax_{v\in\dom(V)} \sum_{\substack{f\in\features^\textup{ctx}\\f = f_o \land f_{\bar o}}} w(f)\Delta_o(f_o)[f_{\bar o}=\fact{V}{v}] .
\end{align}
If we denote the inner sum in \eqref{eq:important:1} with $b^o_V$, then
\begin{align}
\varphi(s) - \varphi(s') \leq \Delta_o^{\textup{ind}} + \sum_{V\in\variables_{\bar o}} b^o_V
\label{eq:important:2}
\end{align}
for all transitions $\trans{s}{o}{s'}\in\transitions$. Therefore, if
$\Delta_o^{\textup{ind}} + \sum_{V\in\variables_{\bar o}} b^o_V \leq cost(o)$
for all operators $o$, then $\varphi$ is consistent.  Conversely, if
$\varphi$ is consistent, then $\varphi(s)-\varphi(\res{s}{o})\leq cost(o)$
for operator $o$ and the states $s$ in which $o$ is applicable.
In particular, for states $s^*$ such that $s^*[V]=\pre(o)[V]$
for $V\in\vars(o)$, and $s^*[V]=v_V^*$ otherwise. It is then not
difficult to check that the inequality in \eqref{eq:important:2}
is tight for such states $s^*$.
Hence, $\varphi$ is consistent iff
$\Delta_o^{\textup{ind}} + \sum_{V\in\variables_{\bar o}} b^o_V \leq cost(o)$
for all operators $o$.

Putting everything together, we see that the constraint~\eqref{constraint:transition-consistency}
for a fixed operator $o$ is equivalent to the constraints
\begin{align}
\label{constraint:transition-consistency-rewrite}
&\Delta_o^\textup{ind} + \sum_{V\in\variables_{\bar o}} z^o_V \le \cost(o), \\
&z^o_V \ge \!\!
  \sum_{\substack{f\in\features^\textup{ctx}\\ f = f_o\land\fact{V}{v}}} \!\!
  w(f)\Delta_o(f_o)
  \quad\text{for $V\!\in\!\variables_{\bar o}, v\!\in\!\dom(V)$}
\label{constraint:bound-ge-max}
\end{align}
where $z^o_V$ is a new variable that upper bounds $b^o_V$.
This set of constraints has $O(|\variables| d)$ constraints
where $d$ bounds the size of the variable domains, while
each constraint has size $O(|\features|+|\variables|)$.
The set of constraints is over the variables $\{w(f):f\in\features\}\cup\{z^o_V:o\in\operators,V\in\variables_{\bar o}\}$.

\begin{theorem}
\label{thm:tractable:binary}
Let $\features$ be a set of features of size at most 2
for a planning task $\Pi$. The set of solutions to the constraints
\eqref{constraint:goal-awareness}, and
\eqref{constraint:transition-consistency-rewrite}--\eqref{constraint:bound-ge-max}
for each operator, projected to $w$, corresponds to the set of weight
functions of admissible and consistent potential heuristics for $\Pi$
over $\features$.
The total number of constraints is $O(|\operators||\variables| d)$,
where $d$ bounds the size of variable domains, while each
constraint has size $O(|\features|+|\variables|)$.
\end{theorem}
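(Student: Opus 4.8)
The statement collects the chain of equivalences developed above, so the proof is mainly a matter of assembling the pieces and then counting. First I would reduce to goal-awareness and consistency: since a consistent heuristic is admissible iff it is goal-aware, it suffices to show that the listed constraints characterize exactly the goal-aware and consistent potential heuristics over $\features$. Goal-awareness means $\varphi(\goal)\le 0$, and because $\varphi(\goal)=\sum_{f\in\features}w(f)[\goal\models f]$ is linear in $w$, this condition is precisely constraint~\eqref{constraint:goal-awareness}.

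Next I would handle consistency one operator at a time. For a fixed $o\in\operators$, consistency on the transitions labeled $o$ is constraint~\eqref{constraint:transition-consistency} for every $\trans{s}{o}{s'}\in\transitions$; the derivation above --- partitioning $\features$ into irrelevant, context-independent and context-dependent features and performing the case analysis on $\Delta_o(f,s)$ --- shows that this is equivalent to the single scalar inequality $\Delta_o^\textup{ind}+\sum_{V\in\variables_{\bar o}}b^o_V\le\cost(o)$, where $b^o_V=\max_{v\in\dom(V)}\sum_{f=f_o\land\fact{V}{v}}w(f)\Delta_o(f_o)$. Sufficiency is exactly~\eqref{eq:important:2}; necessity follows by evaluating at the witness states $s^*$ that agree with $\pre(o)$ on $\vars(o)$ and take a maximizing value $v_V^*$ on every $V\in\variables_{\bar o}$, where~\eqref{eq:important:2} is tight. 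This tightness claim is the real crux of the argument, but it has already been established above.

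I would then eliminate the maximum by the standard epigraph reformulation: the scalar inequality holds iff there exist reals $z^o_V$ that are at least every one of the domain-value expressions --- that is, constraint~\eqref{constraint:bound-ge-max} --- and satisfy $\Delta_o^\textup{ind}+\sum_V z^o_V\le\cost(o)$, that is, constraint~\eqref{constraint:transition-consistency-rewrite}. Indeed, from a feasible point one may take $z^o_V=b^o_V$, and conversely any feasible $z^o_V$ only enlarges the left-hand side of~\eqref{constraint:transition-consistency-rewrite}. Since the variables $z^o_V$ occur only in the constraints indexed by this same $o$, projecting them out (e.g.\ by Fourier--Motzkin elimination on those variables) reintroduces exactly the scalar inequality and no other constraint; combining this over all operators with~\eqref{constraint:goal-awareness} shows that the projection of the solution set to the $w$-coordinates is exactly the set of weight functions whose potential heuristic is goal-aware and consistent, hence admissible and consistent.

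Finally, the size bounds are a direct count: besides the single constraint~\eqref{constraint:goal-awareness}, of size $O(|\features|)$, each operator $o$ contributes one copy of~\eqref{constraint:transition-consistency-rewrite} --- of size $O(|\features|+|\variables|)$, expanding $\Delta_o^\textup{ind}$ over $\features^\textup{ind}$ and summing one $z^o_V$ per $V\in\variables_{\bar o}$ --- and one copy of~\eqref{constraint:bound-ge-max}, of size $O(|\features|)$, for each pair $(V,v)$ with $V\in\variables_{\bar o}$ and $v\in\dom(V)$. This yields $O(|\variables|d)$ constraints per operator and $O(|\operators||\variables|d)$ in total, each of size $O(|\features|+|\variables|)$, as claimed. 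I expect the only point still requiring care to be checking that the projection step introduces no spurious constraints and drops none; the genuinely delicate step, the tightness of~\eqref{eq:important:2} at the witness states, is already done.
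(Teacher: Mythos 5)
Your proposal is correct and follows essentially the same route as the paper, which proves Theorem~\ref{thm:tractable:binary} by the derivation immediately preceding its statement: goal-awareness as the linear constraint~\eqref{constraint:goal-awareness}, per-operator consistency via the partition into irrelevant, context-independent and context-dependent features, tightness of~\eqref{eq:important:2} at the witness states $s^*$, and the epigraph reformulation with the auxiliary variables $z^o_V$. Your added remarks on the Fourier--Motzkin projection of the $z^o_V$ and the explicit size accounting are consistent elaborations of what the paper leaves implicit.
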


\section{High-Dimensional Potential Heuristics}

In this section we show that a general result like
Theorem~\ref{thm:tractable:binary} is not possible,
unless NP equals P, for sets of features of dimension
3 or more, but we identify classes of problems on
which potential heuristics can be characterized
compactly.

\subsection{Intractability}

Theorem~\ref{thm:tractable:binary} allows one to answer
many interesting questions in polynomial time about
potential heuristics of dimension 2.
In particular, by solving a single LP one can test whether
a given potential heuristic is consistent and/or goal-aware.
We use this idea to show that no general result like
Theorem~\ref{thm:tractable:binary} is possible for potential
heuristics of dimension 3, by making a reduction of
non-3-colorability (a decision problem that is complete
for coNP \cite{garey-johnson-1979}) into the problem of testing whether a potential
heuristic of dimension 3 is consistent.

Let $G=\langle V,E\rangle$ be an undirected graph. We first construct,
in polynomial time, a planning task $\Pi=\langle \variables,
\operators, \init, \goal \rangle$ in TNF and a potential
heuristic $\varphi$ of dimension 3 such that $G$ is not 3-colorable iff
$\varphi$ is consistent.
The task $\Pi$ has $|V|+1$ variables: one variable $C_v$ for
the color of each vertex $v\in V$ that can be either red,
blue, or green, and one ``master'' binary variable denoted
by $M$.

For every vertex $v\in V$ and pair of different colors
$c,c'\in\dom(C_v)$, there is a unique operator $o_{v,c,c'}$
of zero cost that changes $C_v$ from $c$ to $c'$ when $M=0$.
For the variable $M$, there is a unique operator $o_M$, also
of zero cost, that changes $M$ from $0$ to $1$.
These are all the operators in the task $\Pi$.

Each state $s\in\states$ encodes a coloring of $G$, where
the color of vertex $v$ is the value $s[C_v]$ of the state
variable $C_v$.
The initial state $\init$ is set to an arbitrary coloring
but with the master variable set to 0; \eg, $\init[M]=0$
and $\init[C_v]=red$ for every vertex $v\in V$.
The goal state $\goal$ is also set to an arbitrary coloring
but with $\goal[M]=1$; \eg, $\goal[M]=1$ and $\goal[C_v]=red$
for every vertex $v\in V$.

The potential heuristic $\varphi$ of dimension 3 is constructed
as follows.  For features $f$ with $\vars(f)=\{M,C_u,C_v\}$ such
that $\{u,v\}\in E$ is an edge in the graph, let its weight
$w(f)=-1$ when $f[M]=1$ and $f[C_u]\neq f[C_v]$, and $w(f)=0$
otherwise.  For the feature $f_M=\fact{M}{1}$ of dimension 1,
let $w(f_M)=|E|-1$. The weight $w(f)$ for all other features $f$
is set to 0.

Let us now reason about the states of the task $\Pi$ and the
values assigned to them by the heuristic.
Let $s$ be a state for $\Pi$.
If $s[M]=0$, then $\varphi(s)=0$.
If $s[M]=1$, then no operator is applicable at $s$ and
$\varphi(s)\geq -1$, with $\varphi(s)=-1$ iff $s$ encodes a
3-coloring of $G$, as the feature $f_M$ contributes a value of
$|E|-1$ to $\varphi(s)$, while the features corresponding to
edges contribute a value of $-|E|$ when $s$ encodes a coloring.

Let us consider a transition $\trans{s}{o}{s'}\in\transitions$.
Clearly, $s[M]\leq s'[M]$ as no operator decreases the value of $M$.
If $s[M]=s'[M]=0$, then $\varphi(s)=\varphi(s')=0$.
If $s[M]=0$ and $s'[M]=1$, then $\varphi(s)\leq \varphi(s')$ iff $s'$
does not encode a 3-coloring of $G$.
The case $s[M]=s'[M]=1$ is not possible as no operator is applicable
in states with $s[M]=1$.
Therefore, since all operator costs are equal to zero, $\varphi$
is consistent iff there is no transition $\trans{s}{o}{s'}$ with
$s[M]=0$, $s'[M]=1$ and $s'$ encoding a 3-coloring of $G$, and the
latter iff the graph $G$ is not 3-colorable.

Finally, observe that testing whether a potential function $\varphi$ is
inconsistent can be done in non-deterministic polynomial time: guess a
state $s$ and an operator $o$, and check whether $\varphi(s) >
\varphi(\res{s}{o}) + \cost(o)$.

\begin{theorem}
\label{thm:intractability}
Let $\features$ be a set of features for a planning task $\Pi$,
and let $\varphi$ be a potential heuristic over $\features$.
Testing whether $\varphi$ is consistent is coNP-complete.
\end{theorem}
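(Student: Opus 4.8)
The plan is to prove coNP-completeness in two halves — membership in coNP, and coNP-hardness, the latter via the construction set out above. For membership, I would show that the complement problem, deciding whether $\varphi$ is \emph{inconsistent}, lies in NP. A certificate is a single transition $\trans{s}{o}{s'}\in\transitions$ violating consistency, i.e.\ with $\varphi(s) > \cost(o) + \varphi(s')$. This certificate has size polynomial in $\Pi$ (a state and an operator), and verifying it only requires testing applicability of $o$ in $s$, computing $s'=\res{s}{o}$, and evaluating $\varphi(s)=\sum_{f\in\features}w(f)[s\models f]$ and $\varphi(s')$ analogously. Hence the complement is in NP and consistency is in coNP.

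For hardness, I would reduce from non-3-colorability of undirected graphs, which is coNP-complete as the complement of the NP-complete 3-colorability problem. The reduction is the polynomial-time map described above: from $G=\langle V,E\rangle$ build the TNF task $\Pi$ with one color variable $C_v$ per vertex and a master variable $M$, the zero-cost recoloring operators (enabled only while $M=0$), the single zero-cost operator $o_M$ flipping $M$ from $0$ to $1$, arbitrary initial and goal colorings with $\init[M]=0$ and $\goal[M]=1$, and the dimension-3 potential heuristic $\varphi$ with $w(\fact{M}{1})=|E|-1$, weight $-1$ on each feature over $\{M,C_u,C_v\}$ (for $\{u,v\}\in E$) that has $f[M]=1$ and $f[C_u]\neq f[C_v]$, and weight $0$ on every other feature. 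The features used have dimension exactly 3 — matching the boundary with the tractable case of Theorem~\ref{thm:tractable:binary} — and only $O(|E|)$ of them carry nonzero weight, so the encoding is polynomial.

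The core of the proof is the equivalence ``$\varphi$ is consistent iff $G$ is not 3-colorable,'' which I would obtain by a case split on the $M$-values along a transition, using two facts: (i) $\varphi(s)=0$ whenever $s[M]=0$; and (ii) when $s[M]=1$, only the feature $\fact{M}{1}$ and, for each edge, the single matching edge-feature are true in $s$, so $\varphi(s)=(|E|-1)-k(s)$ where $k(s)$ is the number of bichromatic edges of the coloring encoded by $s$; in particular $\varphi(s)\ge -1$, with equality iff $s$ encodes a proper 3-coloring. Since no operator decreases $M$ and none is applicable once $M=1$, every transition is either of the form $s[M]=s'[M]=0$ (then $\varphi(s)-\varphi(s')=0=\cost(o)$, so consistency holds) or of the form $s[M]=0$, $s'[M]=1$ via $o_M$ (then the consistency inequality $\varphi(s)-\varphi(s')\le 0$ reduces to $0\le(|E|-1)-k(s')$, i.e.\ to ``$s'$ does not encode a proper 3-coloring''). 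Because $o_M$ maps every coloring with $M=0$ to the corresponding coloring with $M=1$, $\varphi$ is consistent iff no coloring of $G$ is proper, i.e.\ iff $G$ is not 3-colorable.

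The step I expect to require the most care is fact (ii): getting the weight bookkeeping exactly right so that $\varphi(s)=(|E|-1)-k(s)$, hence a proper coloring gives $\varphi(s')=-1<0$ — violating consistency from its $M=0$ predecessor — while any other coloring gives $\varphi(s')\ge 0$ and causes no violation. The remaining pieces — the polynomiality of the construction and the exhaustiveness of the two-case analysis over transitions — are routine, and putting membership and hardness together yields coNP-completeness.
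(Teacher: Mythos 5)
Your proposal is correct and follows essentially the same route as the paper: membership via a guessed violating state--operator pair, and hardness via the identical reduction from non-3-colorability with the master variable $M$, the $|E|-1$ weight on $\fact{M}{1}$, and weight $-1$ on the bichromatic edge features. Your explicit bookkeeping $\varphi(s) = (|E|-1) - k(s)$ for states with $s[M]=1$, where $k(s)$ counts bichromatic edges, is just a slightly more detailed statement of the paper's observation that edge features contribute $-|E|$ exactly when $s$ encodes a proper 3-coloring.
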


\subsection{Parametrized Tractability}

We first give an algorithm for maximizing a sum of functions using
linear programming, and then apply it to characterize high-dimensional
potential heuristics.

\subsubsection{Maximizing a Sum of Functions.}

Let $\mathcal X$ be a set of finite-domain variables. We extend the
notation $\dom(\mathcal X)$ to mean the set of variable assignments
over $\mathcal X$. For an assignment $\nu \in \dom(\mathcal X)$ and a
\emph{scope} $S \subseteq \mathcal X$, we use $\nu|_S$ to describe the
restriction of $\nu$ to $S$. Let $\Psi$ be a set of scoped functions
$\langle S, \psi \rangle$ with $S \subseteq \mathcal X$ and $\psi : S
\to \mathbb V$ for a set of values $\mathbb V$. For now, think of
$\mathbb V$ as the real numbers $\mathbb R$, but we will later
generalize this. We only require that maximization and addition is
defined on values in $\mathbb V$, that both operations are commutative
and associative, and that $\max\,\{a+c, b+c\} \equiv c + \max\,\{a, b\}$
for all $a,b,c \in \mathbb V$. We are interested in the value
$\textit{Max}(\Psi) = \max_{\nu \in \dom(\mathcal X)} \sum_{\langle S,
\psi \rangle \in \Psi} \psi(\nu|_S)$.

Computing $\textit{Max}(\Psi)$ is the goal of constraint optimization
for extensional constraints, an important problem in AI. It is
challenging because the number of valuations in $\dom(\mathcal X)$ is
exponential in the number $|\mathcal X|$ of variables. Bucket
elimination \cite{dechter-2003} is a well-known algorithm to compute
$\textit{Max}(\Psi)$. For reasons that will become clear later in this
section, we describe the bucket elimination algorithm in a slightly
unusual way: in our formulation, the algorithm generates a system of
equations, and its output can be extracted from the (uniquely defined)
solution to these equations. The system of equations makes use of
auxiliary variables $\auxvar_1, \dots, \auxvar_m$ that take values
from $\mathbb V$. The generated equations have the form $\auxvar_i =
\max_{j \in \{1, \dots, k_i\}} e_{i,j}$, where $e_{i,j}$ is a sum that
contains only values from $\mathbb V$ or the variables $\auxvar_1,
\dots, \auxvar_{i-1}$. Solutions to the system of equations guarantee
that $\auxvar_m \equiv \textit{Max}(\Psi)$.

\subsubsection{Bucket Elimination}

We now describe the general algorithm and state its correctness
(without proof due to lack of space). Its execution depends on an order
$\sigma = \langle X_1, \dots, X_n \rangle$ of the variables in
$\mathcal X$. The algorithm operates in stages which are enumerated
in a decreasing manner, starting at stage $n+1$ and ending at stage $0$:
\begin{itemize}
    \item Stage $n+1$ (Initialization).
        Start with a set $\{B_i\}_{i=0}^n$ of \emph{empty} buckets.
        Place each $\langle S, \psi \rangle \in \Psi$ into the bucket
        $B_i$ if $X_i$ is the largest variable in $S$, according to
        $\sigma$, or into the bucket $B_0$ if $S = \emptyset$.

    \item Stages $i=n,\ldots,1$ (Elimination).
        Let $\langle S_j, \psi_j \rangle$ for $j \in \{1, \dots, k_i\}$
        be the scoped functions currently in bucket $B_i$. Construct
        the scope $S_{X_i} = (\bigcup_{j \in \{1, \dots, k_i\}} S_j)
        \setminus \{X_i\}$ and the function $\psi_{X_i} : S_{X_i} \to
        \mathbb V$ that represents the contribution of all functions that
        depend on $X_i$. The definition of $\psi_{X_i}$ is added to the
        generated system of equations by adding one auxiliary variable
        $\auxvar_{X_i,\nu}$ for every $\nu \in \dom(S_{X_i})$ which
        represents the value $\psi_{X_i}(\nu)$:
        \[
            \auxvar_{X_i,\nu} =
            \max_{x_i \in \dom(X_i)}
            \sum_{j \in \{1, \dots, k_i\}}
            \psi_j(\nu_{x_i}|_{S_j})
        \]
        where $\nu_{x_i} = \nu \cup \{X_i \mapsto x_i\}$ extends the
        variable assignment $\nu$ with $X_i \mapsto x_i$. If $\psi_j$
        is a function in $\Psi$, then $\psi_j(\nu_{x_i}|_{S_j})$ is an
        element of $\mathbb V$. Otherwise $\psi_j$ is a previously
        defined function $\psi_{X_{i'}}$ for $i' > i$ and its value for
        $\nu' = \nu_{x_i}|_{S_j}$ is represented by $\auxvar_{X_{i'},\nu'}$.

        The newly defined function $\psi_{X_i}$ no longer depends on
        $X_i$ but depends on all variables in $S_{X_i}$, so $\langle
        S_{X_i}, \psi_{X_i} \rangle$ is added to bucket $B_j$ if $X_j$
        is the largest variable in $S_{X_i}$ according to $\sigma$ or
        to $B_0$ if $S_{X_i} = \emptyset$. Observe that $j < i$ because
        $S_{X_i}$ only contains variables from scopes where $X_i$ is
        the largest variable and $S_{X_i}$ does not contain $X_i$.

    \item Stage $0$ (Termination).
        Let $\langle S_j, \psi_j \rangle$ for $j \in \{1, \dots, k\}$
        be the scoped functions currently in bucket $B_0$. Add the
        auxiliary variable $\auxvar_\Psi$ and the equation
        $\auxvar_\Psi = \sum_{j \in \{1, \dots, k\}} \psi_j$
        analogously to the elimination step. (All $S_j$ are empty
        and the maximum is over $\dom(\emptyset) = \{\emptyset\}$.)
\end{itemize}

\noindent\emph{Example.} Consider $\Psi = \{\langle \{X\}, f \rangle,
\langle \{X, Y\}, g\rangle\}$ over the binary variables $\mathcal
X=\{X, Y\}$. Bucket elimination generates the following system of
equations for the variable order $\sigma = \langle X, Y\rangle$.
\begin{align*}
    \auxvar_1 = \auxvar_{Y,\{X \mapsto 0\}} &= \max_{y \in \{0,1\}} g(0,y) \\
    &= \max\, \{ g(0,0), g(0,1)\} \\
    \auxvar_2 = \auxvar_{Y,\{X \mapsto 1\}} &= \max_{y \in \{0,1\}} g(1,y)\\
    &= \max\, \{ g(1,0), g(1,1)\} \\
    \auxvar_3 = \auxvar_{X,\emptyset} &= \max_{x \in \{0,1\}} (f(x) +
    \auxvar_{Y,\{X \mapsto x\}})\\
    & = \max\,\{f(0) + \auxvar_1, f(1) + \auxvar_2\} \\
    \auxvar_4 = \auxvar_\Psi &= \auxvar_{X,\emptyset} = \auxvar_3
\end{align*}

\subsubsection{Bucket Elimination for Linear Expressions}

As a generalization of the bucket elimination algorithm, consider
$\mathbb V$ to be the following set $\mathbb E$ of mathematical
expressions over a set of variable symbols $\mathcal Y$. For every $Y
\in \mathcal Y$ and $r \in \mathbb R$, the expressions $Y$, $r$, and
$rY$ are in $\mathbb E$. If $a$ and $b$ are elements of $\mathbb E$,
then the expressions $(a + b)$ and $\max\,\{a, b\}$ are elements of
$\mathbb E$. There are no additional elements in $\mathbb E$. An
assignment $f : \mathcal Y \to \mathbb R$ that maps variables to values
can be extended to $\mathbb E$ in the straight-forward way. Two
expressions $a, b \in \mathbb E$ are equivalent if $f(a) = f(b)$ for
all assignments $f$. An expression is \emph{linear} if it does not
mention $\max$.

Clearly, maximization and addition are commutative, associative and satisfy
$\max\,\{a+c, b+c\} \equiv c + \max\,\{a, b\}$ for all expressions $a, b,
c \in \mathbb E$. Bucket elimination therefore generates a system of equations $\auxvar_i =
\max_{j \in \{1, \dots, k_i\}} e_{i,j}$, where all $e_{i,j}$ are sums
over expressions and variables $\auxvar_{i'}$ with $i' < i$. Since a variable
is a mathematical expression, the whole result can be seen as a system
of equations over the variables $\mathcal Y \cup \{\auxvar_1, \dots, \auxvar_m\}$.
If additionally all functions in $\Psi$ only produce linear expressions
over $\mathcal Y$, then in the resulting system all $e_{i,j}$ are
linear expressions over $\mathcal Y \cup \{\auxvar_1, \dots, \auxvar_m\}$.

Consider the example problem again. We define $f$ and $g$ so they map
to linear expressions over $\mathcal Y = \{a,b\}$:

\begin{center}
    \small
    \begin{tabular}{l|llcl|ll}
        $f(x)$ & $x = 0$ & $x = 1$ \\\hline
               & $3a - 2b$ & $4a + 2b$\\&&
    \end{tabular}
    \quad
    \begin{tabular}{l|ll}
        $g(x,y)$ & $x = 0$ & $x = 1$ \\\hline
        $y = 0$  & $8a$ & $-3b$ \\
        $y = 1$  & $7b$ & $0$
    \end{tabular}
\end{center}
In the resulting system of equations all elements in the maxima are
linear expressions over the variables $\{a, b, \auxvar_1, \auxvar_2,
\auxvar_3, \auxvar_4\}$:
\begin{align*}
    \auxvar_1 &= \max\,\{8a, 7b\}\\
    \auxvar_2 &= \max\,\{-3b, 0\}\\
    \auxvar_3 &= \max\,\{3a - 2b + \auxvar_1, 4a+2b + \auxvar_2\}\\
    \auxvar_4 &= \auxvar_3
\end{align*}
Bucket elimination guarantees that $\auxvar_4 \equiv \textit{Max}(\Psi)$ for
any value of $a$ and $b$ in $\mathbb{E}$.  We argue that this system of
equations can be solved by an LP solver.

\begin{theorem}
    Let $\mathcal Y$  and $\{\auxvar_1, \dots \auxvar_m\}$ be disjoint sets of
    variables. Let $P_{\max}$ be the system of equations $\auxvar_i = \max_{j
    \in \{1, \dots, k_i\}} e_{i,j}$ for $i \in \{1, \dots, m\}$, where
    $e_{i,j}$ is a linear expression over variables $\mathcal Y \cup
    \{\auxvar_1, \dots, \auxvar_{i-1}\}$. Let $P_{\textup{LP}}$ be the set of
    linear constraints $\auxvar_i \ge e_{i,j}$ for $i \in \{1, \dots, m\}$
    and $j \in \{1, \dots, k_i\}$.

    Every solution of $P_{\max}$ is a solution of $P_{\textup{LP}}$
    and for every solution $f$ of $P_{\textup{LP}}$ there is a solution
    $f'$ of $P_{\max}$ with $f'(Y) = f(Y)$ for all $Y \in \mathcal Y$
    and $f'(\auxvar) \le f(\auxvar)$ for all $\auxvar \notin \mathcal Y$.
\end{theorem}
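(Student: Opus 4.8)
The plan is to prove the two inclusions separately. The easy one --- every solution of $P_{\max}$ is a solution of $P_{\textup{LP}}$ --- I would dispose of in a single line: if $f$ satisfies $\auxvar_i = \max_{j} e_{i,j}$ when evaluated under $f$, then in particular $f(\auxvar_i) \ge f(e_{i,j})$ for every $j$, which is precisely the family of constraints defining $P_{\textup{LP}}$.

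For the other inclusion, given a solution $f$ of $P_{\textup{LP}}$ I would build the required $f'$ by a ``tightest fit'' construction, which is also the unique solution of $P_{\max}$ extending $f|_{\mathcal Y}$: keep $f'$ equal to $f$ on $\mathcal Y$, then sweep over the auxiliary variables in increasing index order, setting $f'(\auxvar_i) = \max_{j} f'(e_{i,j})$ using the values of $f'$ already fixed. First I would check this is well defined: $e_{i,j}$ mentions only variables in $\mathcal Y \cup \{\auxvar_1, \dots, \auxvar_{i-1}\}$, all of which have been assigned by the time $\auxvar_i$ is processed. By construction $f'$ satisfies every equation of $P_{\max}$ and agrees with $f$ on $\mathcal Y$, so the only remaining obligation is $f'(\auxvar_i) \le f(\auxvar_i)$ for all $i$.

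That inequality I would prove by induction on $i$, and the step I expect to carry the weight is recording first that each $e_{i,j}$ is \emph{monotone} in the lower-indexed auxiliary variables: every $\auxvar_{i'}$ with $i' < i$ occurs in $e_{i,j}$ only with a nonnegative coefficient (in fact coefficient $1$), because the bucket-elimination construction feeds the value of an earlier bucket into a later one purely additively. Granting this, the induction is routine: assuming $f'(\auxvar_{i'}) \le f(\auxvar_{i'})$ for all $i' < i$ and using that $f'$ and $f$ coincide on $\mathcal Y$, monotonicity yields $f'(e_{i,j}) \le f(e_{i,j})$ for every $j$; since $f$ solves $P_{\textup{LP}}$ we also have $f(e_{i,j}) \le f(\auxvar_i)$ for every $j$; chaining through the maximum gives $f'(\auxvar_i) = \max_{j} f'(e_{i,j}) \le \max_{j} f(e_{i,j}) \le f(\auxvar_i)$. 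The case $i=1$ is the special case where the induction hypothesis is vacuous, so it needs no separate treatment.

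The main obstacle here is conceptual rather than computational: an arbitrary $P_{\textup{LP}}$-solution can ``overshoot'' on the auxiliary variables --- any $\auxvar_i$ may be assigned a value strictly above $\max_{j} e_{i,j}$ --- so one cannot simply reuse $f$, which is exactly why the statement asks only for $f'(\auxvar) \le f(\auxvar)$ rather than equality. The monotonicity step that lets us control the constructed $f'$ is therefore the crux, and it depends on the sign structure of the $e_{i,j}$ produced by bucket elimination; I would make this structural property explicit in the setup before running the induction, since without it the constructed $f'$ could exceed $f$ on some $\auxvar_i$.
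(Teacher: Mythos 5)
Your argument is correct, and since the paper states this theorem without any proof, yours is presumably the intended one: the forward direction is immediate, and the ``tightest fit'' sweep in increasing index order, followed by induction on $i$ to show $f'(\auxvar_i) \le f(\auxvar_i)$, is the natural route to the backward direction. The construction is well defined for exactly the reason you give, and the chain $f'(\auxvar_i) = \max_j f'(e_{i,j}) \le \max_j f(e_{i,j}) \le f(\auxvar_i)$ closes the induction.

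The monotonicity point you isolate is not a cosmetic precaution --- it is genuinely load-bearing, and the theorem as literally stated (with $e_{i,j}$ an \emph{arbitrary} linear expression over $\mathcal Y \cup \{\auxvar_1, \dots, \auxvar_{i-1}\}$) is in fact false without it. Take $\mathcal Y = \{Y\}$ with equations $\auxvar_1 = \max\,\{Y\}$ and $\auxvar_2 = \max\,\{-\auxvar_1\}$: the assignment $f(Y) = 0$, $f(\auxvar_1) = 5$, $f(\auxvar_2) = -5$ satisfies $P_{\textup{LP}}$, but the unique $P_{\max}$-solution with $Y = 0$ assigns $\auxvar_2 = 0 > -5$. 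So your proof establishes a corrected statement in which each $\auxvar_{i'}$ with $i' < i$ is required to occur in $e_{i,j}$ with nonnegative coefficient, and you are right that this hypothesis must be made explicit rather than inherited silently from the application. It does hold for the systems the theorem is actually applied to: in bucket elimination each previously defined $\auxvar_{X_{i'},\nu'}$ is substituted into exactly one later sum, once, with coefficient $+1$, so the corollary and the subsequent use of $P_{\textup{LP}(\Psi,\sigma)}$ are unaffected. With that hypothesis recorded, your proof is complete.
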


As a corollary, we can use this result to represent a ``symbolic''
version of the bucket elimination algorithm with unknowns $\mathcal Y$
as an LP. (Note that the constraints generated by the bucket
elimination algorithm have exactly the form required by the theorem if
functions in $\Psi$ produce linear expressions.) This LP has the
property that for every assignment to the unknowns $\mathcal Y$ there
exists a feasible solution, and the values of $\auxvar_m$ in these
feasible solutions are exactly the set of numbers greater or equal to
$\textit{Max}(\Psi)$ for the given assignment to $\mathcal Y$. (For
simplicity, it would be preferable if \emph{only} $\textit{Max}(\Psi)$
itself resulted in a feasible assignment to $\auxvar_m$, but we will
see that the weaker property where $\auxvar_m$ may overestimate
$\textit{Max}(\Psi)$ is sufficient for our purposes.)

We denote the set of constraints for this LP by $P_{\textup{LP}(\Psi, \sigma)}$.
This LP can be solved in time that is polynomial in the size of
$P_{\textup{LP}(\Psi, \sigma)}$, so to bound the complexity, we
have to consider the number and size of the constraints in $P_{\textup{LP}(\Psi, \sigma)}$.

\inlinecite{dechter-2003} defines the \emph{dependency graph} of a
problem $\Psi$ over variables $\mathcal X$ as the undirected graph
$G(\Psi) = \langle \mathcal X, E \rangle$ with set of vertices given
by the variables $\mathcal X$ and an edge $\langle X,X' \rangle \in E$
iff $X \neq X'$ and there is a scoped function $\langle S, \psi
\rangle$ in $\Psi$ with $\{X, X'\} \subseteq S$. Given an undirected
graph $G$ and an order of its nodes $\sigma$, a \emph{parent} of a
node $n$ is a neighbor of $n$ that precedes $n$ in $\sigma$.
\citeauthor{dechter-2003} defines the \emph{induced graph} of $G$
along $\sigma$ as the result of processing each node of $G$ in
descending order of $\sigma$ and for each node connecting each pair of
its parents if they are not already connected. The induced width of
$G$ along $\sigma$ then is the maximal number of parents of a node in
the induced graph of $G$ along $\sigma$.

If there are $n$ variables in $\mathcal X$ and each of their domains
is bounded by $d$, then eliminating variable $X_i$ adds one equation
$\auxvar_{X_i,\nu} = \max_{j \in \dom(X_i)} e_{i,j}$ for each
valuation $\nu$ of the scope $S_{X_i}$ (variables relevant for $X_i$).
The size of this scope is limited by the induced width $w(\sigma)$, so
the number of valuations is limited by $d^{w(\sigma)}$. As there are
$n$ buckets to eliminate, the number of auxiliary variables in the LP
can thus be bounded by $O(nd^{w(\sigma)})$. Each such variable occurs
in $|\dom(X_i)| \le d$ constraints of the form $\auxvar_{X_i,\nu} \ge
e_{i,j}$ in $P_{\textup{LP}(\Psi, \sigma)}$, so there are
$O(nd^{w(\sigma) + 1})$ constraints.

\begin{theorem}
    \label{thm:ve}
    Let $\Psi$ be a set of scoped functions over the variables in
    $\mathcal X$ that map to linear expressions over $\mathcal Y$.
    Let $\sigma$ be an ordering for $\mathcal X$.

    Then $P_{\textup{LP}(\Psi, \sigma)}$ has $O(|\mathcal Y| + |\mathcal
    X|d^{w(\sigma)})$ variables and $O(|\mathcal X|d^{w(\sigma) + 1})$
    constraints, where $d = \max_{X\in\mathcal X} |\dom(X)|$ and
    $w(\sigma)$ is the induced width of $G(\Psi)$.
\end{theorem}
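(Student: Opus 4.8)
The plan is to bound the number and size of the constraints in $P_{\textup{LP}(\Psi,\sigma)}$ directly from the description of the bucket elimination algorithm, which already generates constraints of exactly the form required by the preceding theorem. First I would recall the structure of the generated system: there is one auxiliary variable $\auxvar_{X_i,\nu}$ for every variable $X_i \in \mathcal X$ and every assignment $\nu \in \dom(S_{X_i})$, plus the single termination variable $\auxvar_\Psi$. So counting auxiliary variables reduces to bounding $|\dom(S_{X_i})|$, i.e., bounding $|S_{X_i}|$. The key structural fact, which I would take from the correctness of bucket elimination (and Dechter's analysis of induced width), is that the scope $S_{X_i}$ processed when eliminating $X_i$ consists exactly of the parents of $X_i$ in the induced graph of $G(\Psi)$ along $\sigma$; hence $|S_{X_i}| \le w(\sigma)$. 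This gives $|\dom(S_{X_i})| \le d^{w(\sigma)}$, and summing over the $|\mathcal X|$ buckets yields $O(|\mathcal X| d^{w(\sigma)})$ auxiliary variables. Adding the $|\mathcal Y|$ symbolic unknowns gives the claimed $O(|\mathcal Y| + |\mathcal X| d^{w(\sigma)})$ total.

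Next I would count constraints. By the theorem on $P_{\max}$ versus $P_{\textup{LP}}$, each equation $\auxvar_{X_i,\nu} = \max_{x_i \in \dom(X_i)} e_{i,j}$ turns into $|\dom(X_i)| \le d$ inequality constraints $\auxvar_{X_i,\nu} \ge e_{i,j}$. Since there are $O(|\mathcal X| d^{w(\sigma)})$ such equations and the termination stage contributes only one more equation, the total number of constraints is $O(|\mathcal X| d^{w(\sigma)+1})$, as claimed. I would also note in passing that each individual constraint is a linear expression over $\mathcal Y \cup \{\auxvar_1,\dots,\auxvar_m\}$ whose right-hand side $e_{i,j}$ is the sum of at most one term per function originally in bucket $B_i$; while the statement of Theorem~\ref{thm:ve} only asks for the variable and constraint counts, the per-constraint size is polynomially bounded as well, so the whole LP has polynomial size whenever $w(\sigma)$ is constant.

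The main obstacle is the structural claim that $|S_{X_i}| \le w(\sigma)$ — i.e., that the scope accumulated in bucket $B_i$ at the moment $X_i$ is eliminated is precisely the parent set of $X_i$ in the induced graph. This is the heart of Dechter's induced-width analysis of bucket elimination and follows by a downward induction on the elimination order: when $X_i$ is eliminated, a new scoped function on $S_{X_i} = (\bigcup_j S_j)\setminus\{X_i\}$ is produced, and this scope introduces a clique on $S_{X_i}$ in the induced graph; conversely every edge of the induced graph incident to $X_i$ with the other endpoint below $X_i$ arises exactly this way, so the variables that end up co-occurring with $X_i$ in some bucket are exactly its induced-graph parents. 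Since the paper states the bucket elimination correctness "without proof due to lack of space," I would cite \cite{dechter-2003} for this fact rather than reprove it, and the rest of the argument is then the routine counting sketched above.
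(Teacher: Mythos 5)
Your proposal is correct and follows essentially the same route as the paper: the paper's own argument is exactly the counting you give (one auxiliary variable $\auxvar_{X_i,\nu}$ per bucket $B_i$ and per valuation $\nu$ of $S_{X_i}$, with $|S_{X_i}|\le w(\sigma)$ bounding the valuations by $d^{w(\sigma)}$, and $|\dom(X_i)|\le d$ inequalities per equation yielding $O(|\mathcal X|d^{w(\sigma)+1})$ constraints). The only difference is that you spell out, via the induced-graph induction from \cite{dechter-2003}, why the accumulated bucket scope is contained in the parent set of $X_i$ in the induced graph, a fact the paper simply asserts.
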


The smallest possible induced width of $G(\Psi)$ along any order
$\sigma$ is called the induced width of $G(\Psi)$ and equals the
treewidth of $G(\Psi)$ \cite{dechter-2003}. Unfortunately, finding the
induced width or a minimizing order is NP-hard. However, it is
fixed-parameter tractable \cite{downey-fellows-1999} with the
treewidth as the parameter \cite{bodlaender-sicomp1996}.

\subsubsection{High-Dimensional Potential Functions.}

Of the two conditions that characterize goal-aware and consistent
potential heuristics, constraint~\eqref{constraint:consistency} is
the most challenging to test as it really denotes an exponential
number of constraints, one per state. The constraint is
equivalent to
\begin{align*}
    cost(o)
    &\ge \max_{s\models\pre(o)} (\varphi(s) - \varphi(\res{s}{o}))\\
    &= \max_{s\models\pre(o)} \sum_{f\in\features} w(f)\Delta_o(f,s)\\
    &= \max_{\nu \in \dom(\variables_{\bar o})} \sum_{f\in\features} w(f)\Delta_o(f,s_{o,\nu})
\end{align*}
for all operators $o\in\operators$ where the partial state $s_{o,\nu}$ is $s_{o,\nu} = \pre(o)\cup \nu$.

As done before, for a fixed operator $o\in\operators$, we partition
$\features$ as $\features = \features^\textup{irr} \cup
\features^\textup{ind} \cup \features^\textup{ctx}$ and consider the
functions $\Delta^\textup{irr}_o(s)$, $\Delta^\textup{ind}_o(s)$ and
$\Delta^\textup{ctx}_o(s)$ for states $s$ on which the operator $o$ is
applicable. We have $\Delta^\textup{irr}_o(s) = 0$ and
$\Delta^\textup{ind}_o(s) = \Delta^\textup{ind}_o$ as the value of the
latter is independent of the state $s$. Therefore, the
constraint~\eqref{constraint:consistency} is equivalent to
\begin{align}
    \label{constraint:consistency:2}
    \cost(o)
    &\ge \Delta^\textup{ind}_o +
    \max_{\nu \in \dom(\variables_{\bar o})} \Delta^\textup{ctx}_o(s_{o,\nu})
\end{align}
where $\Delta^\textup{ctx}_o(s_{o,\nu})=\sum_{f\in\features^\textup{ctx}}
w(f)\Delta_o(f,s_{o,\nu})$. Observe that for $f\in\features^\textup{ctx}$,
the value of $\Delta_o(f,s_{o,\nu})$ is equal to
\begin{align*}
    [\pre(o)\cup s_{o,\nu}|_{\variables_f\setminus\variables_o} \models f] -
    [\eff(o)\cup\res{s_{o,\nu}}{o}|_{\variables_f\setminus\variables_o}
    \models f] .
\end{align*}
Let us define the functions $\psi^f_o$ that maps partial assignments
$\nu \in \dom(\variables_f \setminus \variables_o)$ to expressions in
$\{0, w(f), -w(f)\}$ as
\[
    \psi^f_o(\nu) = w(f)([\pre(o) \cup \nu \models f] -
                         [\eff(o) \cup \nu \models f]).
\]
For operator $o\in\operators$, we define $\Psi_o=\{\psi^f_o : f \in
\features^\textup{ctx}\}$. Then
\begin{align*}
    \max_{\nu \in \dom(\variables_{\bar o})} \Delta^\textup{ctx}_o(s_{o,\nu})
    &=
    \max_{\nu \in \dom(\variables_{\bar o})}
    \sum_{f \in \features^\textup{ctx}}
    \psi^f_o(s_{o,\nu}|_{\variables_f \setminus \variables_o})\\
    &=
    \textit{Max}(\Psi_o).
\end{align*}
Constraint \eqref{constraint:consistency:2} is then equivalent to
\begin{align*}
    \cost(o) \ge
    \Delta^\textup{ind}_o + \textit{Max}(\Psi_o).
\end{align*}

Applying Theorem~\ref{thm:ve} to $\Psi_o$ along an ordering $\sigma_o$
for the variables $\variables$ in $\Pi$, we obtain a set of constraints
$P_{\textup{LP}(\Psi_o, \sigma_o)}$ that characterize
$\textit{Max}(\Psi_o)$. We can replace the above constraint by
$\cost(o) \ge \Delta^\textup{ind}_o + \auxvar_{\Psi_o}$ and the constraints
in $P_{\textup{LP}(\Psi_o, \sigma_o)}$. The constraints
$P_{\textup{LP}(\Psi_o, \sigma_o)}$ allow setting $\auxvar_{\Psi_o}$ higher
than necessary, but this is never beneficial as long as $\auxvar_{\Psi_o}$
does not occur in the objective.

\begin{theorem}
    \label{thm:tractable:general}
    Let $\features$ be a set of features for a planning task $\Pi$, and
    let $\{\sigma_o\}_{o\in\operators}$ be an indexed collection of
    orderings of the variables in $\Pi$ (one ordering $\sigma_o$ for
    each operator $o\in\operators$). Then, the set of solutions to
    \begin{align}
        \varphi(\goal) &\le 0, &
        \label{constraint:gen:goal-awareness} \\
        \Delta^\textup{ind}_o + \auxvar_{\Psi_o} &\le \cost(o), &
        \text{for each operator $o\in\operators$,}
        \label{constraint:gen:consistency:1} \\
        &P_{\textup{LP}(\Psi_o, \sigma_o)} &
        \text{for each operator $o\in\operators$}
        \label{constraint:gen:consistency:2}
    \end{align}
    (projected on the feature weights) corresponds to the set of weight
    functions of admissible and consistent potential heuristics for
    $\Pi$ over $\features$.
\end{theorem}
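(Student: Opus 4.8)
The plan is to prove Theorem~\ref{thm:tractable:general} by assembling the chain of equivalences that the surrounding text has already established and then invoking Theorem~\ref{thm:ve} to control the size of the resulting linear program. Concretely, I would proceed in three stages. First, I would recall that a consistent heuristic is admissible iff it is goal-aware, so it suffices to characterize goal-aware and consistent potential heuristics by constraints~\eqref{constraint:goal-awareness} and~\eqref{constraint:consistency}. Goal-awareness is captured verbatim by~\eqref{constraint:gen:goal-awareness}, which is linear in the weights since $\varphi(\goal) = \sum_{f\in\features} w(f)[\goal\models f]$. Second, for a fixed operator $o$, I would reproduce the reduction of the exponentially many consistency constraints (one per state in which $o$ is applicable) to the single inequality $\cost(o) \ge \Delta^\textup{ind}_o + \textit{Max}(\Psi_o)$, relying on the partition $\features = \features^\textup{irr}\cup\features^\textup{ind}\cup\features^\textup{ctx}$, the facts $\Delta^\textup{irr}_o(s)=0$ and $\Delta^\textup{ind}_o(s)=\Delta^\textup{ind}_o$, and the identity $\max_{\nu\in\dom(\variables_{\bar o})}\Delta^\textup{ctx}_o(s_{o,\nu}) = \textit{Max}(\Psi_o)$ that was derived just above the theorem statement.

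Third, and this is the crux, I would argue that replacing $\textit{Max}(\Psi_o)$ by the auxiliary variable $\auxvar_{\Psi_o}$ together with the constraint set $P_{\textup{LP}(\Psi_o,\sigma_o)}$ preserves exactly the set of feasible weight functions after projection. Here I would use the corollary to the LP-for-bucket-elimination theorem: for every assignment to the unknowns $\mathcal Y = \{w(f) : f\in\features^\textup{ctx}\}$, the system $P_{\textup{LP}(\Psi_o,\sigma_o)}$ is feasible, and the achievable values of $\auxvar_{\Psi_o}$ in those feasible solutions are precisely $\{t \in \mathbb R : t \ge \textit{Max}(\Psi_o)\}$. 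I would then observe that the conjunction of~\eqref{constraint:gen:consistency:1} and~\eqref{constraint:gen:consistency:2} is satisfiable for a given $w$ iff there exists $t \ge \textit{Max}(\Psi_o)$ with $\Delta^\textup{ind}_o + t \le \cost(o)$, which holds iff $\Delta^\textup{ind}_o + \textit{Max}(\Psi_o) \le \cost(o)$ — the ``overestimation'' slack in $\auxvar_{\Psi_o}$ is harmless because $\auxvar_{\Psi_o}$ appears only on the smaller side of~\eqref{constraint:gen:consistency:1} and nowhere in the objective. Taking the conjunction over all $o\in\operators$ and adding goal-awareness yields that the projection onto $w$ of the feasible region of~\eqref{constraint:gen:goal-awareness}--\eqref{constraint:gen:consistency:2} is exactly the set of $w$ for which $\varphi$ is goal-aware and consistent, hence admissible and consistent.

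The main obstacle I anticipate is making the projection argument fully rigorous in both directions simultaneously: one direction (a genuine potential heuristic gives a feasible LP point) needs the existence half of the corollary — pick any $\auxvar$-values realizing the bucket-elimination solution — while the other direction (a feasible LP point yields a valid heuristic) needs the inequality half, namely that $\auxvar_{\Psi_o}$ can only overestimate $\textit{Max}(\Psi_o)$, so that~\eqref{constraint:gen:consistency:1} forces $\Delta^\textup{ind}_o + \textit{Max}(\Psi_o) \le \cost(o)$. A secondary subtlety is that the auxiliary variables for different operators are disjoint (each $\Psi_o$ and ordering $\sigma_o$ introduces its own $\auxvar$ symbols), so the per-operator arguments compose without interference; I would state this explicitly. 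Everything else is bookkeeping already carried out in the preceding paragraphs, so the proof itself can be quite short, essentially citing the earlier derivation and Theorem~\ref{thm:ve}'s corollary.
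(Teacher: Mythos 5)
Your proposal is correct and follows essentially the same route as the paper, which proves Theorem~\ref{thm:tractable:general} implicitly through the derivation preceding it: reduce consistency for each operator $o$ to $\cost(o) \ge \Delta^\textup{ind}_o + \textit{Max}(\Psi_o)$, then replace $\textit{Max}(\Psi_o)$ by $\auxvar_{\Psi_o}$ constrained via $P_{\textup{LP}(\Psi_o,\sigma_o)}$, noting that overestimation of $\auxvar_{\Psi_o}$ is harmless since it occurs only on the smaller side of \eqref{constraint:gen:consistency:1}. Your explicit treatment of the two directions of the projection argument and of the disjointness of auxiliary variables across operators is a welcome sharpening of what the paper leaves implicit.
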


We finish the section by bounding the number and size of the
constraints in $P_{\textup{LP}(\Psi_o,\sigma_o)}$. We define the
\emph{context-dependency graph} $G(\Pi, \features, o)$ for a planning
task $\Pi$, features $\features$ and an operator $o$ as follows: the
vertices are the variables of $\Pi$ and there is an edge between $V$
and $V'$ with $V \neq V'$ iff there is a feature $f\in\features$ with
$\vars(f) \cap \vars(o) \neq \emptyset$ and
$\{V,V'\}\subseteq\vars(f)\setminus\vars(o)$.

\begin{theorem}
    Let $\features$ be a set of features for a planning task $\Pi$, let
    $o$ be an operator of $\Pi$, and let $\sigma_o$ an ordering on the
    variables in $\Pi$. The number of constraints in
    $P_{\textup{LP}(\Psi_o,\sigma_o)}$ is $O(nd^{w(\sigma_o)+1})$ where
    $n$ is the number of variables, $d$ bounds the size of the variable
    domains, and $w(\sigma_o)$ is the induced width of
    $G(\Pi,\features,o)$ along the ordering $\sigma_o$. The number of
    variables in $P_{\textup{LP}(\Psi_o,\sigma_o)}$ is $O(|\features| +
    nd^{w(\sigma_o)})$.
\end{theorem}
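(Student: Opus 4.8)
The plan is to invoke Theorem~\ref{thm:ve} directly, identifying its graph-theoretic parameters with those of the context-dependency graph. First I would observe that $P_{\textup{LP}(\Psi_o,\sigma_o)}$ is exactly the LP produced by bucket elimination applied to the set of scoped functions $\Psi_o = \{\psi^f_o : f \in \features^\textup{ctx}\}$, whose variable set is $\variables$ (or more precisely $\variables_{\bar o}$, but we may treat it as a subset of $\variables$ and pad $\sigma_o$ with the irrelevant variables). By Theorem~\ref{thm:ve}, the number of constraints is $O(|\mathcal X|\, d^{w(\sigma_o)+1})$ and the number of variables is $O(|\mathcal Y| + |\mathcal X|\, d^{w(\sigma_o)})$, where $\mathcal X$ is the variable set of $\Psi_o$, $\mathcal Y$ is the set of unknowns appearing in the linear expressions, $d$ bounds the domain sizes, and $w(\sigma_o)$ is the induced width of the dependency graph $G(\Psi_o)$ along $\sigma_o$.

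The remaining work is to match each of these three quantities with the corresponding quantity in the statement. The variable set $\mathcal X$ of $\Psi_o$ consists of the variables occurring in the scopes of the $\psi^f_o$, which are the variables in $\variables_f \setminus \variables_o$ over context-dependent features $f$; this is a subset of the $n$ variables of $\Pi$, so $|\mathcal X| \le n$. The unknown symbols $\mathcal Y$ are the feature weights $\{w(f) : f \in \features^\textup{ctx}\}$ appearing in the expressions $\psi^f_o(\nu) \in \{0, w(f), -w(f)\}$, so $|\mathcal Y| \le |\features|$. The crucial identification is that the dependency graph $G(\Psi_o)$ coincides with (the relevant part of) the context-dependency graph $G(\Pi, \features, o)$: by definition there is an edge $\{V, V'\}$ in $G(\Psi_o)$ iff some scoped function $\langle S, \psi^f_o \rangle \in \Psi_o$ has $\{V, V'\} \subseteq S = \variables_f \setminus \variables_o$, and such an $f \in \features^\textup{ctx}$ exists precisely when $\vars(f) \cap \vars(o) \neq \emptyset$ (the defining property of context-dependent features) and $\{V,V'\} \subseteq \vars(f) \setminus \vars(o)$ --- which is exactly the edge condition for $G(\Pi,\features,o)$. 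Hence $w(\sigma_o)$, the induced width of $G(\Psi_o)$ along $\sigma_o$, equals the induced width of $G(\Pi,\features,o)$ along $\sigma_o$, and plugging these identities into the bounds of Theorem~\ref{thm:ve} gives the claimed $O(nd^{w(\sigma_o)+1})$ constraints and $O(|\features| + nd^{w(\sigma_o)})$ variables.

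I do not expect a serious obstacle here; this theorem is essentially a bookkeeping corollary of Theorem~\ref{thm:ve}. The one point requiring a little care is the graph identification: one must check that isolated vertices and vertices outside $\variables_{\bar o}$ do not cause a mismatch in the induced width, and that padding $\sigma_o$ (which is an ordering on all of $\variables$, not just the variables actually appearing in $\Psi_o$) with the extra variables does not increase the induced width of the subgraph on the relevant variables --- it does not, since adding isolated or fully-eliminated-last vertices never creates new fill edges among the others. A second minor point is that Theorem~\ref{thm:ve} is stated for a generic ordering $\sigma$ on $\mathcal X$ whereas here $\sigma_o$ orders all of $\variables$; restricting $\sigma_o$ to $\mathcal X$ gives a valid ordering with the same induced width on $G(\Psi_o)$, so the application is legitimate. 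Both of these are routine and can be dispatched in a sentence or two.
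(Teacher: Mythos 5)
Your proposal is correct and follows exactly the route the paper intends: the theorem is a direct instantiation of Theorem~\ref{thm:ve} once one observes that the dependency graph $G(\Psi_o)$ of the scoped functions $\{\psi^f_o : f\in\features^\textup{ctx}\}$ coincides (up to isolated vertices, which do not affect induced width) with the context-dependency graph $G(\Pi,\features,o)$, with $\mathcal X\subseteq\variables$ and $\mathcal Y$ the feature weights. The paper gives no separate argument beyond this identification, so there is nothing further to reconcile.
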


By combining the constraints from the different operators according to
Theorem~\ref{thm:tractable:general}, we thus obtain the following
fixed-parameter tractability result.
\begin{corollary}
  Let $\features$ be a set of features for a planning task $\Pi$.
  Define the parameter $w^*$ as the maximum treewidth of all
  context-dependency graphs for $\Pi$ and $\features$.

  Computing a set of linear constraints that characterize the
  admissible and consistent potential heuristics with features
  $\features$ for $\Pi$ is fixed-parameter tractable with parameter
  $w^*$.
\end{corollary}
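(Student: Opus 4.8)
The plan is to assemble the pieces already in hand. By Theorem~\ref{thm:tractable:general}, for any choice of orderings $\{\sigma_o\}_{o\in\operators}$, the constraints \eqref{constraint:gen:goal-awareness}--\eqref{constraint:gen:consistency:2} characterize exactly the weight functions of admissible and consistent potential heuristics over $\features$, once projected onto the feature weights. So the only thing to verify is that these constraints can be \emph{computed} in time that is polynomial in the input size once $w^*$ is treated as a constant — and that is a counting argument using the previous theorem.

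First I would fix, for each operator $o$, an ordering $\sigma_o$ of the variables of $\Pi$ that achieves the treewidth of the context-dependency graph $G(\Pi,\features,o)$. Here is where I would invoke Bodlaender's algorithm \cite{bodlaender-sicomp1996}: deciding whether a graph has treewidth at most $k$ and, if so, producing an optimal tree decomposition (hence an elimination order of induced width equal to the treewidth) is fixed-parameter tractable with parameter $k$. Since $w^* $ upper-bounds the treewidth of every $G(\Pi,\features,o)$, each such order can be found in time $f(w^*)\cdot\mathrm{poly}(n)$, and there are $|\operators|$ of them, so the total time to obtain all the $\sigma_o$ is still $f(w^*)\cdot\mathrm{poly}(\|\Pi\|,|\features|)$. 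By construction $w(\sigma_o)\le w^*$ for every $o$.

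Next I would bound the size of the constraint system. Goal-awareness \eqref{constraint:gen:goal-awareness} and the per-operator inequality \eqref{constraint:gen:consistency:1} contribute $O(|\operators|)$ constraints, each of size $O(|\features|)$, and these are clearly computable in polynomial time (compute $\Delta^\textup{ind}_o$ by checking, for each context-independent feature, whether it is entailed by $\pre(o)$ and by $\eff(o)$). For \eqref{constraint:gen:consistency:2}, the preceding theorem gives $O(nd^{w(\sigma_o)+1}) = O(nd^{w^*+1})$ constraints and $O(|\features| + nd^{w^*})$ variables in $P_{\textup{LP}(\Psi_o,\sigma_o)}$; summing over $o$ yields $O(|\operators|\,nd^{w^*+1})$ constraints in total. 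Since $d\le n$ trivially, this is polynomial in the input size whenever $w^*$ is fixed. Finally I would observe that running the symbolic bucket elimination of the previous subsection to actually \emph{write down} these constraints takes time linear in their number (each auxiliary variable and each inequality is produced once, with a linear expression of size bounded by the scope, which is $\le w^*$), so the whole construction runs in time $g(w^*)\cdot\mathrm{poly}(\|\Pi\|,|\features|)$, which is exactly fixed-parameter tractability with parameter $w^*$.

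The only genuinely delicate point is the first step: the problem statement asks to \emph{compute} the characterizing constraints, not merely to bound their number, and the natural elimination orders depend on solving an NP-hard treewidth problem. The resolution — and the reason the corollary is about FPT rather than plain polynomial time — is precisely Bodlaender's theorem, so I would make sure to cite it explicitly and note that the $f(w^*)$ factor from finding the orders is the source of the non-polynomial dependence on the parameter; everything downstream (counting constraints, instantiating the LP) is genuinely polynomial. Everything else is routine bookkeeping that piggybacks on Theorem~\ref{thm:tractable:general} and the size bound that immediately precedes the corollary.
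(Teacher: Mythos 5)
Your proposal is correct and follows essentially the same route the paper intends: Theorem~\ref{thm:tractable:general} for the characterization, Bodlaender's FPT algorithm to obtain orderings $\sigma_o$ with $w(\sigma_o)\le w^*$, and the preceding size bound of $O(nd^{w(\sigma_o)+1})$ constraints per operator to conclude polynomial size and construction time for fixed $w^*$. The only (harmless) slip is the aside ``$d\le n$ trivially,'' which need not hold; it is also unnecessary, since $d$ is bounded by the input size and so $d^{w^*+1}$ is polynomial for fixed $w^*$ in any case.
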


We remark that the general result (again) implies a polynomial
characterization of potential heuristics where all features have
dimension at most 2. In this case, no feature can simultaneously
include a variable mentioned in a given operator $o$ and two further
variables not mentioned in $o$, and hence all context-dependency
graphs are devoid of edges. In edge-free graphs, all orderings have
width $0$, and hence for each operator $o$,
$P_{\textup{LP}(\Psi_o,\sigma_o)}$ has $O(nd)$ constraints and
$O(|\features| + n)$ variables. The parameter $w^*$ is 0 in this case.

\begin{figure*}
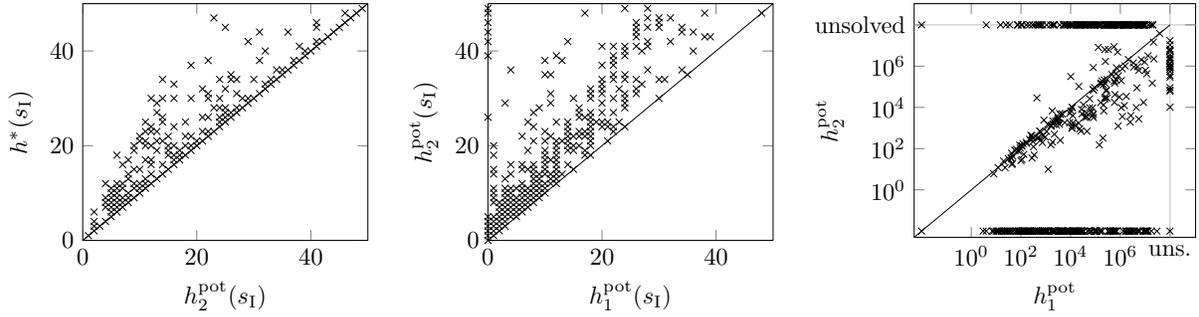

    \centering
    \resizebox{!}{1.65in}{\includestandalone{scatter-initial-h-hopt}}
    \quad
    \resizebox{!}{1.65in}{\includestandalone{scatter-initial-h}}
    \quad
    \resizebox{!}{1.65in}{\includestandalone{scatter-expansions}}
    \caption{
        Initial heuristic values below 50 of $\hpot_1$, $\hpot_2$, and
        $\hopt$. The final plot shows the number of expansions in an
        $\astar$ search with $\hpot_1$ and $\hpot_2$ (without
        expansions in the last $f$-layer).
    }
    \label{fig:initial-h}
\end{figure*}

\section{Relation to Cost Partitioning}

Operator cost partitioning \cite{katz-domshlak-icaps2007wshdip,%
yang-et-al-jair2008,katz-domshlak-aij2010} is a technique to make the
sum of several heuristics admissible by distributing the cost of each
operator between them. \inlinecite{katz-domshlak-aij2010} show how the
\emph{optimal operator cost partitioning} (OCP) can be computed in
polynomial time for a large family of abstraction heuristics.
\inlinecite{pommerening-et-al-aaai2015} show that an admissible and
consistent potential heuristic over all atomic features that achieves
the maximal initial heuristic value corresponds to an OCP over atomic
projection heuristics. Here, we extend this result to all potential
heuristics that use the abstract states of a set of abstractions as
features. To that end, we first discuss a generalization of OCP, which
we call \emph{optimal transition cost partitioning} (TCP)
\cite{keller-et-al-ijcai2016}\footnote{\citeauthor{keller-et-al-ijcai2016}
use \emph{state-dependent cost partitioning}, which may be confused
with partitionings that are re-optimized for each state.}.

\begin{definition}
    Let $\Pi$ be a planning task with cost function $\cost$ and
    transitions $\transitions$. A set of \emph{transition cost
    functions} $\cost_i : \transitions \to {\mathbb R}$,  $1 \le i
    \le n$, is a \emph{transition cost partitioning} if 
    \[
        \sum_{i=1}^n \cost_i(\trans{s}{o}{s'}) \le \cost(o) \qquad
        \text{for all $\trans{s}{o}{s'} \in \transitions$.}
    \]
\end{definition}

To use cost partitionings, the notion of operator cost
functions must be extended to cost functions with possibly
negative values. For example, the cost of an $s$-plan $\pi = \langle
o_1, \dots, o_n \rangle$ under transition cost function $\cost'$ is
$\sum_{i=1}^n \cost'(\trans{s_{i-1}}{o_i}{s_i})$, where $s=s_0, \dots,
s_n = \res{s}{\pi}$ are the states visited by the plan $\pi$.
The cheapest plan cost under $\cost'$ can now be negative or even
$-\infty$ (in the case of negative cost cycles).

\begin{proposition}[\inlinecite{keller-et-al-ijcai2016}]
    Let $\Pi$ be a planning task, $P = \langle\cost_1, \dots,
    \cost_n\rangle$ be a transition cost partitioning for $\Pi$, and
    $h_1, \dots, h_n$ be admissible heuristics for $\Pi$. The
    heuristic $h_P(s) = \sum_{i=1}^n h_i(s, \cost_i)$ is admissible.
\end{proposition}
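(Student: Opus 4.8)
The plan is to show that the combined heuristic $h_P$ never overestimates $\hopt(s)$ by exhibiting, for every state $s$, an $s$-plan whose cost under each component cost function $\cost_i$ is at most what $h_i$ claims, and whose total cost is exactly $\hopt(s)$. First I would fix a state $s$ and, if $s$ has no $s$-plan, observe that $\hopt(s) = \infty$ and the claim is trivial; otherwise let $\pi = \langle o_1, \dots, o_n \rangle$ be an \emph{optimal} $s$-plan under the original cost function $\cost$, visiting states $s = s_0, \dots, s_n = \goal$. The key step is the chain of inequalities
\begin{align*}
    h_P(s) = \sum_{i=1}^n h_i(s, \cost_i)
    &\le \sum_{i=1}^n \cost_i(\pi)
    = \sum_{i=1}^n \sum_{j=1}^{|\pi|} \cost_i(\trans{s_{j-1}}{o_j}{s_j})\\
    &= \sum_{j=1}^{|\pi|} \sum_{i=1}^n \cost_i(\trans{s_{j-1}}{o_j}{s_j})
    \le \sum_{j=1}^{|\pi|} \cost(o_j) = \hopt(s).
\end{align*}
Here the first inequality uses that each $h_i$ is admissible (so $h_i(s, \cost_i) \le \hopt(s, \cost_i) \le \cost_i(\pi)$, since $\pi$ is \emph{some} $s$-plan, though not necessarily an optimal one under $\cost_i$); the middle equality is just reordering a finite double sum; and the second inequality is exactly the defining property of a transition cost partitioning applied transition-by-transition.

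The one subtlety worth flagging is that, because the component cost functions may be negative, ``admissible'' for $h_i$ must be read with respect to the generalized notion of optimal plan cost $\hopt(s, \cost_i)$ introduced just before the proposition, which can be negative or $-\infty$; the inequality $h_i(s,\cost_i) \le \hopt(s,\cost_i)$ is still the right statement, and $\hopt(s,\cost_i) \le \cost_i(\pi)$ holds because $\hopt(s,\cost_i)$ is an infimum over $s$-plans and $\pi$ is one of them. Since $\pi$ reaches $\goal$ under the original task (the transition system is the same regardless of which cost function we use for evaluation; only edge weights change), $\pi$ is a legitimate $s$-plan for each component, so every sum above is well-defined and finite. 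I expect the main (minor) obstacle to be stating this generalized-admissibility bookkeeping cleanly rather than any real mathematical difficulty; the combinatorial heart is just the summation swap, which is immediate once the transition cost partitioning inequality is invoked pointwise along the optimal plan.
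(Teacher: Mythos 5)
Your proof is correct and is essentially the standard argument for this result (the paper itself states the proposition without proof, deferring to the cited work of Keller et al.): bound each $h_i(s,\cost_i)$ by the cost of a single $\cost$-optimal $s$-plan $\pi$ evaluated under $\cost_i$, swap the order of summation, and apply the cost-partitioning inequality $\sum_i \cost_i(\trans{s_{j-1}}{o_j}{s_j}) \le \cost(o_j)$ to each transition of $\pi$. Your handling of the edge cases (no $s$-plan, possibly negative or $-\infty$ component values, and the fact that admissibility of $h_i$ must be read relative to $\hopt(s,\cost_i)$ under the generalized cost function) is exactly the bookkeeping that needs to be flagged, so nothing is missing.
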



\begin{definition}
    Let $\Pi$ be a planning task, $h_1, \dots, h_n$ be admissible
    heuristics for $\Pi$, and $\mathcal P$ be the set of all transition
    cost partitionings. The \emph{optimal
    transition cost partitioning heuristic} $\hTCP$ is
    $\hTCP(s) = \max_{P\in \mathcal P} h_P(s)$.
\end{definition}

Operator cost partitionings and the OCP heuristic are special cases
where all cost functions satisfy $\cost(t_1) = \cost(t_2)$ for all
pairs of transitions $t_1, t_2$ labeled with the same operator. The TCP
heuristic thus dominates the OCP heuristic. The paper by
\citeauthor{keller-et-al-ijcai2016} contains examples where this
dominance is strict.

The linear program defined by \inlinecite{katz-domshlak-aij2010} to
compute an optimal OCP for a set of abstraction
heuristics can be extended to transition cost partitionings.

Formally, an abstraction heuristic is based on a homomorphic mapping
$\alpha: \states \to \states^\alpha$ that maps states of a planning
task $\Pi$ to \emph{abstract states} of an \emph{abstract transition
system} $\ts^\alpha = \langle \states^\alpha, \transitions^\alpha,
\alpha(\init), \{\alpha(\goal)\} \rangle$, such that for every
transition $\trans{s}{o}{s'}$ of $\Pi$, $\transitions^\alpha$ contains
the transition $\trans{\alpha(s)}{o}{\alpha(s')}$ with the same weight.
%
For a collection $\abstractions$ of abstractions, the optimal TCP can
be encoded using two kinds of variables. The variable $h(s)$ represents
the goal distance for each abstract state $s \in \states^\alpha$ of
each $\alpha \in \abstractions$ (we assume the states are uniquely
named). The variable $c^\alpha(t)$ represents the cost of a transition
$t \in \transitions$ that is attributed to abstraction $\alpha$.
Using linear constraints over these variables, we can express that the
variables $h(s)$ do not exceed the true goal distance under the cost
function encoded in $c^\alpha$ and that the cost functions respect the
cost partitioning property:
\begin{align}
    h(\alpha(\goal)) &= 0
    && \text{for $\alpha \in \abstractions$}
    \label{constraint:tcp-goal}\\
    h(\alpha(s)) - h(\alpha(s')) &\le c^\alpha(\trans{s}{o}{s'})
    && \parbox[c][4ex][c]{22mm}{\text{for $\alpha \in \abstractions$}\\
    \text{and $\trans{s}{o}{s'} \in \transitions$}}
    \label{constraint:tcp-consistency}\\
    \sum_{\alpha \in \abstractions} c^\alpha(\trans{s}{o}{s'}) &\le \cost(o)
    && \text{for $\trans{s}{o}{s'} \in \transitions$}
    \label{constraint:tcp-cost-partitioning}
\end{align}

For a set of abstractions $\abstractions$, we define the set
$\features_\abstractions = \bigcup_{\alpha \in \abstractions}
\states^\alpha$ of features for $\abstractions$ with the interpretation that a state $s$ has the
feature $s' \in \states^\alpha$ iff $\alpha(s) = s'$. In the special
case where $\alpha$ is a projection to $k$ variables, the features
$\states^\alpha$ correspond to conjunctions of $k$ facts. We want to
show that TCP heuristics over $\abstractions$ and admissible and
consistent potential heuristics over features $\features_\abstractions$
have the same maximal heuristic value.

\begin{proposition}
    Let $s$ be a state of a planning task $\Pi$ and $\abstractions$
    be a set of abstractions of $\Pi$.
    The set of solutions for
    constraints~(\ref{constraint:tcp-goal})--(\ref{constraint:tcp-cost-partitioning})
    that maximize $\sum_{\alpha\in\abstractions} h(\alpha(s))$ (projected to $h$)
    is equal to the set of solutions for
    constraints~(\ref{constraint:goal-awareness})--(\ref{constraint:consistency})
    for $\features_\abstractions$
    that maximize $\varphi(s)$.
    \label{proposition:solution-sets-tcp-pot}
\end{proposition}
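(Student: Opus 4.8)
The plan is to exhibit an explicit bijection between solutions of the two LP systems (restricted to the maximizers of the two objectives) and argue that it preserves the relevant quantities. The natural correspondence identifies the potential weight $w(s')$ of a feature $s' \in \states^\alpha$ with the (negated) abstract goal distance $h(\alpha^{-1}\text{-state } s')$, or more precisely: given a TCP solution, set $w(s') = -h(s')$ for each $s' \in \states^\alpha$ and each $\alpha \in \abstractions$; given a potential solution over $\features_\abstractions$, set $h(s') = -w(s')$. The first thing to check is that under this correspondence $\varphi(s) = -\sum_{\alpha\in\abstractions} h(\alpha(s))$ up to sign conventions, so that maximizing $\varphi(s)$ and maximizing $\sum_\alpha h(\alpha(s))$ are the same optimization; this follows because $[s \models s'] = 1$ exactly when $s' = \alpha(s)$, so exactly one feature per abstraction is true in $s$.

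Next I would handle feasibility. For the forward direction, starting from a TCP solution $\langle h, c^\alpha\rangle$ satisfying \eqref{constraint:tcp-goal}--\eqref{constraint:tcp-cost-partitioning}, I claim $\varphi$ defined by $w(s') = -h(s')$ is goal-aware and consistent, i.e.\ satisfies \eqref{constraint:goal-awareness}--\eqref{constraint:consistency}. Goal-awareness is immediate from \eqref{constraint:tcp-goal}: $\varphi(\goal) = \sum_\alpha w(\alpha(\goal)) = -\sum_\alpha h(\alpha(\goal)) = 0 \le 0$. For consistency along a transition $\trans{s}{o}{s'}\in\transitions$, summing \eqref{constraint:tcp-consistency} over $\alpha\in\abstractions$ and applying \eqref{constraint:tcp-cost-partitioning} to the transition $\trans{\alpha(s)}{o}{\alpha(s')}$ (which exists in $\transitions^\alpha$ by the homomorphism property) gives $\sum_\alpha (h(\alpha(s)) - h(\alpha(s'))) \le \sum_\alpha c^\alpha(\trans{s}{o}{s'}) \le \cost(o)$, and the left side is exactly $\varphi(s') - \varphi(s)$ under our sign flip — wait, it is $\varphi(s) - \varphi(s')$ after negating, so $\varphi(s) - \varphi(s') \le \cost(o)$ as required. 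The converse direction is the one requiring care: given a potential solution, I must reconstruct transition cost functions $c^\alpha$. The natural choice is $c^\alpha(\trans{s}{o}{s'}) = h(\alpha(s)) - h(\alpha(s')) = \varphi_{\alpha}(s') - \varphi_\alpha(s)$ where $\varphi_\alpha$ is the contribution of abstraction $\alpha$'s features, but one must verify this satisfies \eqref{constraint:tcp-cost-partitioning}; this again follows by summing and using \eqref{constraint:consistency}. One subtlety: \eqref{constraint:tcp-consistency} then holds with equality, which is fine since we only need the inequality, and \eqref{constraint:tcp-goal} holds because consistency plus goal-awareness for potential heuristics over this feature set forces $\varphi(\goal)=0$ at an optimal solution maximizing $\varphi(s)$ — or, more carefully, we may always shift weights so that $h(\alpha(\goal))=0$ without changing $\varphi(s)-\varphi(\goal)$ differences, hence without affecting the maximum.

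The main obstacle I anticipate is the bookkeeping around the maximizers of the two objectives and the non-uniqueness of the lifting: a potential solution only constrains $\varphi$ restricted to genuinely reachable abstract states appearing as features, whereas the TCP LP has a variable $h(s')$ for every abstract state; and conversely the $c^\alpha$ variables in TCP are finer-grained than anything visible on the potential side, so the "equality of solution sets" claim must be read as equality after projecting TCP solutions onto $h$ and potential solutions onto $w$ (with the sign identification). I would therefore be careful to phrase the bijection at the level of the projected variables $h \leftrightarrow w$ and show that (i) every projected-optimal $h$ extends to a full feasible TCP solution, (ii) every projected-optimal $w$ extends to a full feasible potential solution, and (iii) the two projected feasible-and-optimal sets coincide under $h = -w$. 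Steps (i) and (ii) are where the explicit reconstructions of $c^\alpha$ (resp.\ of the auxiliary slack) live, and getting the goal-state normalization exactly right — so that the "$=0$" in \eqref{constraint:tcp-goal} matches the "$\le 0$" in \eqref{constraint:goal-awareness} at optimality — is the one place a reader could get tripped up. Everything else is a routine sign-tracking exercise.
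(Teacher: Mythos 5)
Your overall architecture is the same as the paper's: identify the potential weights $w$ with the abstract goal-distance variables $h$, recover the transition cost functions as $c^\alpha(\trans{s}{o}{s'}) = w(\alpha(s)) - w(\alpha(s'))$, and normalize the weights so that $w(\alpha(\goal)) = 0$ to turn the inequality $\varphi(\goal)\le 0$ into the equality required by \eqref{constraint:tcp-goal}. However, the sign flip $w(s') = -h(s')$ is a genuine error, and it breaks the proof in two places. First, it anti-aligns the objectives: under $w=-h$ you get $\varphi(s) = -\sum_{\alpha}h(\alpha(s))$, so maximizing $\varphi(s)$ corresponds to \emph{minimizing} $\sum_\alpha h(\alpha(s))$, and the two sets of maximizers have no reason to coincide. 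Second, it reverses the consistency inequality: summing \eqref{constraint:tcp-consistency} over $\abstractions$ and applying \eqref{constraint:tcp-cost-partitioning} gives $\sum_\alpha\bigl(h(\alpha(s)) - h(\alpha(s'))\bigr)\le\cost(o)$, and under $h=-w$ the left-hand side equals $\varphi(s')-\varphi(s)$ --- which is what you wrote before your ``wait''; the self-correction to $\varphi(s)-\varphi(s')$ is the mistaken step, and $\varphi(s')-\varphi(s)\le\cost(o)$ is not constraint \eqref{constraint:consistency}. The same reversal sinks the converse direction: with your conventions $\sum_\alpha c^\alpha(\trans{s}{o}{s'}) = \varphi(s')-\varphi(s)$, which consistency bounds from \emph{below} by $-\cost(o)$ rather than from above by $\cost(o)$, so \eqref{constraint:tcp-cost-partitioning} does not follow.

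The fix is simply to drop the negation and set $w=h$, as the paper does. The variable $h(\alpha(s))$ in \eqref{constraint:tcp-goal}--\eqref{constraint:tcp-cost-partitioning} is already a heuristic value, and the potential $\varphi(s)=\sum_\alpha w(\alpha(s))$ is itself the heuristic estimate, so the two play structurally identical roles: \eqref{constraint:tcp-consistency} summed over $\abstractions$ together with \eqref{constraint:tcp-cost-partitioning} is exactly \eqref{constraint:consistency}, and $\sum_\alpha h(\alpha(s)) = \varphi(s)$, so the objectives coincide with no sign tracking at all. With that correction, the remaining ingredients you describe --- the key identity $\varphi(s)=\sum_\alpha w(\alpha(s))$ because exactly one feature per abstraction is true in $s$, the reconstruction of $c^\alpha$ from weight differences (making \eqref{constraint:tcp-consistency} hold with equality), and the goal normalization, which preserves feasibility and can only increase $\varphi(s)$ since $\varphi(\goal)\le 0$ and hence preserves optimality --- are exactly the paper's proof.
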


\begin{proof}
The important observation for this proof is that we can write $\varphi(s) =
\sum_{f\in\features_\abstractions} w(f)[s \models f]$ as $\varphi(s) =
\sum_{\alpha\in\abstractions} w(\alpha(s))$.

Assume we have an optimal solution to
constraints~(\ref{constraint:tcp-goal})--(\ref{constraint:tcp-cost-partitioning})
and set $w = h$. Obviously, constraint~(\ref{constraint:tcp-goal})
implies constraint~(\ref{constraint:goal-awareness}): if all features
that are present in the goal state have a weight of $0$, then
$\varphi(\goal) = 0$. Summing
constraint~(\ref{constraint:tcp-consistency}) over all abstractions for
a given transition, results in $\sum_{\alpha \in \abstractions}
h(\alpha(s)) - h(\alpha(s')) \le \sum_{\alpha \in \abstractions}
c^\alpha(\trans{s}{o}{s'})$. Together with
constraint~(\ref{constraint:tcp-cost-partitioning}) this implies
constraint~(\ref{constraint:consistency}).

For the other direction, assume we have an optimal solution to
constraints~(\ref{constraint:goal-awareness})--(\ref{constraint:consistency}),
\ie, that $\varphi$ is an admissible and consistent potential heuristic
with weight function $w$. Set $h = w$ and $c^\alpha(\trans{s}{o}{s'}) =
w(\alpha(s)) - w(\alpha(s'))$.

We also assume that $w(\alpha(\goal)) = 0$ for all $\alpha \in
\abstractions$. If this is not the case, consider the weight function
$w'$ with $w'(\alpha(s)) = w(\alpha(s)) - w(\alpha(\goal))$ for all
$\alpha \in \abstractions$ and $s\in\states$. Let $\varphi'$ be the
induced potential function with $\varphi'(s) = \sum_{\alpha \in
\abstractions} w'(\alpha(s)) = \varphi(s) - \varphi(\goal)$ We know
from constraint~(\ref{constraint:goal-awareness}) that $\varphi(\goal)
\le 0$, so $\varphi'$ dominates $\varphi$. It also still satisfies
constraints~(\ref{constraint:goal-awareness}) and
(\ref{constraint:consistency}), so $\varphi'$ must be an
optimal solution that we can use instead of $\varphi$. Thus, constraint~(\ref{constraint:tcp-goal}) is
satisfied.

Constraint~(\ref{constraint:tcp-consistency}) is trivially satisfied.
Constraint~(\ref{constraint:tcp-cost-partitioning}) is also satisfied,
which can be seen by replacing $c^\alpha(\trans{s}{o}{s'})$ by its
definition: the inequality $\sum_{\alpha \in \abstractions}
w(\alpha(s)) - w(\alpha(s')) = \varphi(s) - \varphi(s') \le \cost(o)$
is identical to constraint~(\ref{constraint:consistency}).
\end{proof}

\begin{theorem}
    \label{thm:relation-pot-tcp}
    Let $\Pi$ be a planning task and $\abstractions$ a set of
    abstractions for $\Pi$. For every state $s$ of $\Pi$, let
    $\hpot_{\features_\abstractions,\max s}$ be an admissible and
    consistent potential function with maximal value for $s$. Then
    $\hpot_{\features_\abstractions,\max s}(s) = \hTCP_\abstractions(s)$.
\end{theorem}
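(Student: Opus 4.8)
The plan is to deduce Theorem~\ref{thm:relation-pot-tcp} directly from Proposition~\ref{proposition:solution-sets-tcp-pot}, which already establishes that the two LP feasible regions coincide after projection, for any fixed state $s$. The key observation is that an optimization problem and its feasibility-region description determine the same optimal value as long as the objectives agree under the correspondence of variables. In Proposition~\ref{proposition:solution-sets-tcp-pot}, the TCP side maximizes $\sum_{\alpha\in\abstractions} h(\alpha(s))$ and the potential side maximizes $\varphi(s)$; by the identity $\varphi(s) = \sum_{\alpha\in\abstractions} w(\alpha(s))$ noted at the start of that proof, and the fact that the correspondence sets $w = h$, these two objectives are literally the same linear functional. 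Hence the two optimization problems have the same optimal value.

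Concretely, first I would recall that $\hTCP_\abstractions(s) = \max_{P\in\mathcal P} h_P(s)$ where $h_P(s) = \sum_{i} h_i(s,\cost_i)$, and argue that this maximum over transition cost partitionings equals the optimal value of the LP given by constraints~(\ref{constraint:tcp-goal})--(\ref{constraint:tcp-cost-partitioning}) with objective $\sum_{\alpha\in\abstractions} h(\alpha(s))$. This requires a small argument: given a TCP $P$, the true abstract goal distances $h_i(\cdot,\cost_i)$ form a feasible point of that LP with objective value exactly $h_P(s)$ (goal-awareness and consistency of true goal distances give~(\ref{constraint:tcp-goal})--(\ref{constraint:tcp-consistency}), and $P$ being a cost partitioning gives~(\ref{constraint:tcp-cost-partitioning})); conversely, any feasible point of the LP yields cost functions $c^\alpha$ that form a TCP, and the $h$ values are lower bounds on the corresponding abstract goal distances by consistency, so the LP optimum does not exceed $\hTCP_\abstractions(s)$. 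Second, I would invoke Proposition~\ref{proposition:solution-sets-tcp-pot}: the projections of the two LPs' optimal solution sets coincide, and since the objectives match under $w=h$ and $\varphi(s)=\sum_\alpha w(\alpha(s))$, the optimal values coincide. Third, the optimal value of the potential-heuristic LP with objective $\varphi(s)$ is by definition $\hpot_{\features_\abstractions,\max s}(s)$. Chaining these equalities gives $\hpot_{\features_\abstractions,\max s}(s) = \hTCP_\abstractions(s)$.

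The main obstacle is the first step — carefully relating the definitional $\max_{P\in\mathcal P} h_P(s)$ to the LP optimum, because the LP variables $h(\alpha(s))$ are free to be any values satisfying the consistency and goal constraints, not necessarily the true abstract goal distances. The ``$\le$'' direction is the delicate one: one must argue that for any feasible LP point, each $h(\alpha(s))$ is bounded above by the genuine goal distance in the abstraction under $c^\alpha$ (an induction along a shortest abstract path, using~(\ref{constraint:tcp-consistency}) and~(\ref{constraint:tcp-goal})), so the objective cannot exceed what some actual TCP achieves; one should also note the subtlety that $c^\alpha$ may be negative, so ``goal distance'' is a shortest path that could be $-\infty$ — but then the LP would be infeasible or the bound trivial, so this does not cause trouble. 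Once that lemma-level fact is in hand, the rest is bookkeeping already carried out in Proposition~\ref{proposition:solution-sets-tcp-pot}.
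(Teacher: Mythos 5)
Your proposal is correct and follows essentially the same route as the paper: Theorem~\ref{thm:relation-pot-tcp} is obtained directly from Proposition~\ref{proposition:solution-sets-tcp-pot} once one knows that the LP with constraints~(\ref{constraint:tcp-goal})--(\ref{constraint:tcp-cost-partitioning}) and objective $\sum_{\alpha\in\abstractions} h(\alpha(s))$ computes $\hTCP_\abstractions(s)$. The only difference is that the paper delegates that first step to the cited Katz--Domshlak/Keller et al.\ constructions rather than arguing it explicitly, whereas you (rightly) flag and sketch the shortest-path/negative-cycle argument needed to justify it.
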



\section{Evaluation}

We implemented one- and two-dimensional admissible potential heuristics
(called $\hpot_1$ and $\hpot_2$ in the following) in the Fast Downward
planning system \cite{helmert-jair2006} and evaluated them on the tasks
from the optimal tracks of IPC 1998--2014 using limits of 2\,GB and
24~hours for memory and run time. We use this fairly high time limit,
as we are focusing on heuristic quality, not evaluation time. The
linear programs for $\hpot_2$ can become quite big, but we are
confident that approximations with smaller representations exist.

Our first concern is heuristic accuracy in the initial state. In 437
out of 696 cases (63\%) where we could determine both $\hpot_2(\init)$
and $\hopt(\init)$, the $\hpot_2$ value is perfect. These perfect
values occur in 42 of the 51 domains. The first plot in
Figure~\ref{fig:initial-h} shows initial heuristic values for
$\hpot_2(\init)$ vs.\ $\hopt$. There are still some heuristic values
that are far from the optimum, suggesting that using features of size
three or larger might be necessary in some domains.

The second plot in Figure~\ref{fig:initial-h} compares initial
heuristic values of $\hpot_1$ and $\hpot_2$. As expected, larger
features frequently achieve higher heuristic values and the initial
heuristic value is perfect less frequently. Out of the 696 tasks
mentioned above, only 110 (16\%) have a perfect $\hpot_1$ value.
There are also several cases in which $\hpot_1(\init)$ is 0, while $\hpot_2$
reports values as high as 48.
The last plot shows the number of expansions in an $\astar$ search.
Using $\hpot_2$ almost always leads to fewer expansions than $\hpot_1$. The exceptions
are mostly from the domain Blocksworld. The tasks on the x-axis of the
plot are those where $\hpot_2$ is perfect.

Theorem~\ref{thm:relation-pot-tcp} shows that $\hpot_2$ computes a TCP that is optimal in the
initial state. We know that there are tasks where the optimal TCP
results in a higher heuristic value than the optimal OCP. But does this
also occur often in practice? To answer this question, we implemented optimal
cost partitioning for projections and ran it for the set of
abstractions that contains all projections to one and two variables; the resulting heuristic is
denoted by $\hOCP_2$.
Out of the $724$ cases where we can compute both
$\hpot_2(\init)$ and $\hOCP_2(\init)$, $\hpot_2(\init)$ is higher than
$\hOCP_2(\init)$ in 312 cases (43\%). In 329 of the remaining cases
(46\%) $\hOCP_2(\init)$ is already perfect, so $\hpot_2$ cannot be
higher.
The advantage of TCP over OCP thus seems to be ubiquitous in practice.

Comparing an $\astar$ search using $\hOCP_2$ to one using $\hpot_2$ we
notice two conflicting effects. First, as we have seen, the potential
heuristic is perfect for more instances and even if it is not, its
value often exceeds the value of $\hOCP_2$. Second, the linear program of
$\hpot_2$ is evaluated only once, while the optimal OCP
is re-computed in every state. The latter is beneficial in some cases ($\hpot_2$ is guaranteed to be
no worse than $\hOCP_2$ only at the initial state), but this
re-computation takes too much time in practice.
Diversification approaches \cite{seipp-et-al-ijcai2016} can be used
to boost the value of potential heuristics on states other than the initial
state.

\section{Conclusion}

We have shown that highly accurate potential heuristics of dimension
2 can be computed in polynomial time. Optimized in each state, these
heuristics correspond to an optimal TCP over atomic and binary
projections, which dominates the optimal OCP.
Generating admissible potential heuristics of higher dimension is not
tractable in general, but possible if the
variables are not too interdependent.

To make higher-dimensional potential heuristics more practically
relevant, methods to cheaply approximate the optimal TCP are needed.
One promising direction is to look into a way of identifying a good
subset of features, because usually not all features are necessary to
maximize the heuristic value. Features could be statically selected, or
learned dynamically during the search. Using diversification techniques
to select a set of multiple admissible potential heuristics also sounds
promising.

\section{Acknowledgments}

This work was supported by the European Research Council as part of the
project ``State Space Exploration: Principles, Algorithms and
Applications''.

\bibliographystyle{aaai}
\bibliography{abbrv,literatur,crossref}

\end{document}